\documentclass{article}

\usepackage{microtype}
\usepackage{graphicx}
\usepackage{subfigure}
\usepackage{booktabs} 

\usepackage{hyperref}

\usepackage[textsize=tiny]{todonotes}
\usepackage{styles/symbols}
\usepackage{natbib}
\usepackage{geometry}

\title{Estimating Aleatoric Uncertainty in the Causal Treatment Effect}
\author{Liyuan Xu \\ \small Secondmind \\ \small \texttt{liyuan9988@gmail.com} \and Bijan Mazaheri \\ \small Dartmouth Engineering \\ \small \texttt{bijan.h.mazaheri@dartmouth.edu}}
\date{}

\begin{document}

\maketitle

\begin{abstract}

Previous work on causal inference has primarily focused on \emph{averages} and \emph{conditional averages} of treatment effects, with significantly less attention on variability and uncertainty in individual treatment responses. In this paper, we introduce the \emph{variance} of the treatment effect (VTE) and \emph{conditional variance} of treatment effect (CVTE) as the natural measure of aleatoric uncertainty inherent in treatment responses, and we demonstrate that these quantities are identifiable from observed data under mild assumptions, even in the presence of unobserved confounders. We further propose nonparametric kernel-based estimators for VTE and CVTE, and our theoretical analysis establishes their convergence. We also test the performance of our method through extensive empirical experiments on both synthetic and semi-simulated datasets, where it demonstrates superior or comparable performance to naive baselines.
\end{abstract}

\section{Introduction}
While much of statistics and machine learning focuses on accurately capturing associative relationships, causal inference \cite{pearl2009causality, peters2017elements} applies these tools to understand \emph{causal} dependencies. It seeks to understand the outcome of \emph{actions}, such as the implications of an intervention or policy. To do this, we need to address \emph{confounding bias}, which arises when confounders simultaneously affect both treatment and outcome. 

The vast majority of literature on causal inference focuses on recovering the \emph{average treatment effect (ATE)}, i.e., the \emph{average} influence of an intervention. Less attention has been paid to modeling aleatoric uncertainty in the causal effect across individuals, which is a critical measure of an intervention's risk.

Heterogeneity of treatment effects may inform critical decisions within epidemiology. For example, the emergence of a new strain or variant of a virus may lead to highly variable treatment effects with a vaccine. The ability to distinguish these events from a more homogeneous decline in effectiveness may help public health officials decide when to update a vaccine and when to administer boosters.

Classically, the heterogeneity of a causal effect is measured via the conditional average treatment effect (CATE), which represents the average responses to the intervention within a subpopulation sharing specific attributes \citep{xie2012estimating, wager2018estimation, kunzel2019metalearners, de2020two}. This, however, only captures the heterogeneity driven by the observable attributes. Consequently, CATE cannot measure the uncertainty caused by the virus strains when the affected groups are similar, i.e., the observed covariates may not fully distinguish cases that respond to the vaccine from new vaccine-resistant cases.

To fully capture this uncertainty, we propose to examine the \emph{variance} of the treatment effect (VTE), or the \emph{conditional variance} of the treatment effect (CVTE) for a more granular view of uncertainty. These quantities provide natural measures of the risk and uncertainty of a treatment response. The higher VTE suggests that treatment effects are highly individualized, while low VTEs suggest a more uniform response.

In this paper, we develop identification conditions for the VTE and CVTE, showing that they can be estimated from observable data even when a confounder exists. This extends the work by \citet{Kawakami2025Moments}, which estimates the higher-order moments of causal effects under a different setting. Motivated by this, we derive non-parametric estimators based on kernel models which provably converge to the true causal quantities. We further test the performance of our estimators empirically through experiments, based on both synthetic and semi-simulated settings, and the proposed estimator performs promisingly in all settings.

This paper is structured as follows: After reviewing the related work in \cref{sec:related-work}, we introduce the formal definition of VTE and CVTE in \cref{sec:prob-settings}. The identification condition is given in \cref{sec:identification}. We then present the kernel-based non-parametric estimator in \cref{sec:estimation}, in which we also provide the theoretical analysis. We demonstrate the empirical performance of the proposed method in \cref{sec:experiment}, covering both synthetic and semi-simulated settings.

\section{Related Works} \label{sec:related-work}

\paragraph{Heterogeneous treatment effects} Numerous approaches study heterogeneous treatment effects by looking at the causal effects within different subpopulations. For example, the average treatment effect on the treated (ATT) \cite{xie2012estimating} measures the discrepancy in causal effects between treated and untreated subpopulations, while the conditional average treatment effect (CATE) \citep{xie2012estimating, wager2018estimation, kunzel2019metalearners, de2020two} considers the effect within subpopulations sharing the same attributes. Furthermore, \citet{mazaheri2025synthetic} recovers the CATE conditioned on latent class, using ``Synthetic Potential Outcomes'' extending proximal causal inference \citep{miao2018identifying}. Although these metrics give useful insights into the heterogeneity of the causal effect, they tend to underestimate the uncertainty of the causal effect by ignoring the uncertainty caused by exogenous variables.


\paragraph{Counterfactual Distributions} 

Recently, there has been an increasing interest in exploring the causal effect on the distributional quantities, such as Gini coefficient. To address this, several methods are proposed to model the distribution of the potential outcome given a specific treatment \citep{Cernozhukov2013Inference, Kennedy2023Semi,Krikamol2021Counterfactual}. However, these approaches have a different focus from VTE, since we are interested in the variation in the \emph{gap} between potential outcomes rather than the potential outcomes themselves. This requires us to model the covariance between the potential outcomes, making these methods not directly applicable to our settings.


\paragraph{Moments of the Causal Effect}
Our work is most closely related to \citet{Kawakami2025Moments}, which investigated the (central) moments of the causal effect, since VTE is the second central moments of the causal effect. Our work, however, diverges from \citet{Kawakami2025Moments} in three key ways. Firstly, we allow for the existence of confounders, while \citet{Kawakami2025Moments} assumed that the outcome is only affected by the treatment and latent exogenous variables and tested on Randomized Controlled Trials (RCTs). Such confounding has the ability to drive different treatment responses. To handle this, we secondly introduce the conditional VTE (CVTE) to disentangle treatment uncertainty between different groups at a chosen stratification.
Finally, our kernel-based model does not require the additional monotonicity assumption employed in \citet{Kawakami2025Moments}. Our assumption of conditionally uncorrelated potential outcomes is sufficient to address our motivating examples.

\paragraph{Variance of ATE Estimators} A separate line of work seeks to quantify the variance of the ATE \emph{estimator} \citep{reifeis2022variance, matsouaka2023variance, tran2023robust} to provide the confidence interval of ATE. These approaches quantify the epistemic uncertainty in the ATE, which will decrease with additional data. In contrast, the VTE represents the aleatoric uncertainty, which is inherent in the population.

\paragraph{Kernel Mean Embedding in Causal Inference} 
In causal inference, we often need to compute the (conditional) expectation of a function of interest. The kernel mean embedding \citep{Gre2012CME,Song2013Kernel} provides a powerful tool for this without unstable density estimation, when the function is expressed with features in a reproducing kernel Hilbert space (RKHS). This approach is applied to a wide range of causal inference problems, including ATE / CATE estimation \citep{Singh2023Kernel}, front-door adjustment \citep{Singh2023Kernel}, instrumental variable regression \citep{singh2019kernel}. We also use this approach to estimate the CVTE.

\section{Problem Setting} \label{sec:prob-settings}

In this section, we introduce our novel causal parameters and show that they can be estimated from observed data. Throughout the paper, we denote a random variable in a capital letter (e.g. $A$), the realization of this random variable in lowercase (e.g. $a$), and the set where a random variable takes values in a calligraphic letter (e.g. $\mathcal{A}$). We assume that the data are generated from a distribution $P$.

\paragraph{VTE Definition} We introduce the target causal parameters using the potential outcome framework \citep{Rubin2005}. Let the treatment and the observed outcome be $A \in \{0,1\}$ and $Y \in \mathcal{Y} \subseteq [-R, R]$. We denote the potential outcome given treatment $a$ as $Y^{(a)} \in \mathcal{Y}$. Here, we assume \emph{no inference}, which means that we observe $Y = Y^{(a)}$ when $A=a$. Given these, we define \emph{the variance of treatment effect (VTE)} as the variance of the differences of potential outcomes.
\begin{definition} Define \emph{VTE} as
\begin{align*}
    \mathrm{VTE} &= \Var{Y^{(1)} - Y^{(0)}}.
\end{align*}
\end{definition}
This can capture aleatoric uncertainty in the treatment effect that other methods cannot, as discussed below.
 
\begin{figure}
    \centering
    \includegraphics[width=\linewidth]{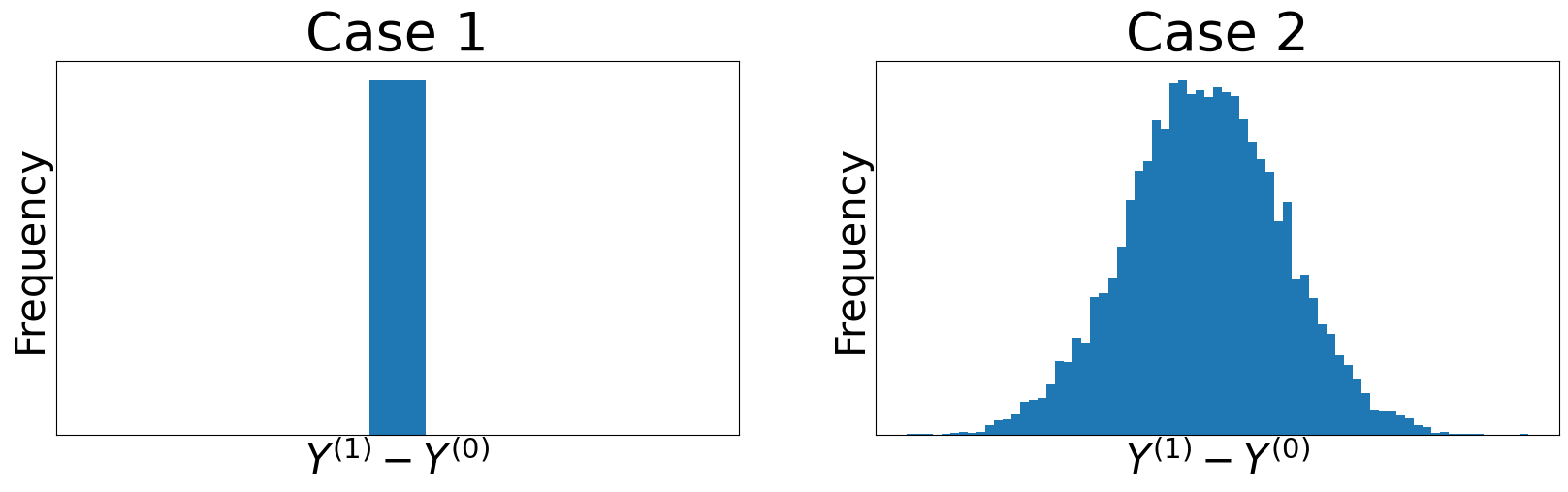}
    \caption{Histogram of treatment effect $Y^{(1)} - Y^{(0)}$}
    \label{fig:heterogeneity}
\end{figure}


\paragraph{Motivating Example} To illustrate the descriptive value of the VTE, let's consider the following two scenarios:
\begin{align*}
    \text{Case 1:}& \quad Y^{(0)} \sim \mathcal{N}(0, 1), \quad Y^{(1)} = Y^{(0)}\\
    \text{Case 2:}& \quad Y^{(0)}, Y^{(1)} \sim \mathcal{N}(0, 1), \quad Y^{(1)} \indepe  Y^{(0)}
\end{align*}
For now, let us assume \emph{no confounding} for both cases, i.e. treatment $A$ is assigned independently from potential outcomes $Y^{(0)}, Y^{(1)}$.

Although the average treatment effects (ATE) $\expect{Y^{(1)} -Y^{(0)}}$ are zero in both scenarios, we observe the larger heterogeneity in the individual causal effect in the second scenario (See \cref{fig:heterogeneity}). In particular, the treatment effect for Case 1 is homogeneous and completely certain. Meanwhile, the treatment effect for Case 2 is highly uncertain. Such information is critical when assessing the risk associated with policy decisions and medical interventions.

The average treatment effect on the treated (ATT) $\expect{Y^{(1)} -Y^{(0)} | A=1}$, is often used to model the differences in the treatment effects between treated and control groups \citep{xie2012estimating}. However, in our setting, the ATT is an inappropriate measure of uncertainty, since both cases exhibit zero ATT under the no confounding assumption.

Another relevant metric is the \emph{counterfactual distribution} \citep{Cernozhukov2013Inference}, which characterizes the distribution of the potential outcomes by
\begin{align*}
    \prob[\braket{a,a'}]{Y} = \prob{Y^{(a)} \middle| A = a'}
\end{align*}
for $a,a' \in \{0,1\}$. Again, this is not appropriate risk measurement, since all counterfactual distributions $ \prob[\braket{a,a'}]{Y}$ are the standard Gaussian distribution in both cases under no confounding. Hence, while counterfactual distributions may quantify the uncertainty of a single potential outcome, they are insufficient to quantify treatment effect uncertainty driven by the joint distribution of two potential outcomes.

While these two toy cases are identical in all of these previously mentioned metrics, the VTE is able to capture their difference:
\begin{align*}
   \text{Case 1:}& \quad  \mathrm{VTE} = \Var{Y^{(0)} - Y^{(0)}} = 0\\
    \text{Case 2:}& \quad  \mathrm{VTE} = \Var{Y^{(0)}} + \Var{Y^{(1)}} = 2.
\end{align*}
This shows the importance of VTE as a metric for heterogeneity and uncertainty.

\paragraph{Extension to Conditional VTE} It is often the case that we are interested in the variance of the treatment effect in a specific sub-population sharing a common trait. For example, we might want to determine whether the heterogeneity in vaccine effectiveness varies by geographic location, which is likely when this heterogeneity is driven by a new variant. In such cases, we may consider \emph{conditional variance of treatment effect (CVTE)} as

\begin{definition}\label{def:cvte} 
Define \emph{CVTE} as
\begin{align*}
    \mathrm{CVTE}(v) &= \Var{Y^{(1)} - Y^{(0)} \middle| V=v}
\end{align*}
where $V \in \mathcal{V}$ is some covariate observed.
\end{definition}
Such a metric is capable of identifying the drivers of heterogeneity in treatment effects. For example, if treatment effect uncertainty is fully explained by $V=v$, then $\mathrm{VTE}(v) = 0$. We can also similarly define the counterfactual quantity, within the sub-population that actually received treatment (i.e., $V=A$). For the sake of simplicity, we focus on VTE and CVTE in this work.

\section{Identification} \label{sec:identification}

The definitions of VTE and CVTE presented in \cref{sec:prob-settings} depend on potential outcomes, which are not observable in reality. In this section, we show a sufficient condition for estimating VTE and CVTE from observational data.

The majority of the literature in causal inference \citep{wager2018estimation,xie2012estimating,Shi2019,chernozhukov2022riesznet,Athey2019Causal,singh2019kernel} assumes that we observe some covariate $X$ that satisfies a condition known as \emph{ignorability}  or \emph{conditional exchangability} \citep{Rubin2005}.
\begin{assum}[Ignorability] \label{assum:ignorability}
    \begin{align*}
        Y^{(0)}, Y^{(1)} \indepe A | X 
    \end{align*}
\end{assum}
\cref{assum:ignorability} ensures that groups with different values of $A$ can be treated as ``similar'' for the purpose of computing counterfactuals, making the ATE computable as differences between the expected outcomes in both ``treated'' ($A=1$) and ``control'' ($A=0$) groups \citep{Rubin2005,rosenbaum1983central}. 

This assumption alone, however, is not sufficient to identify VTE as shown in the following theorem. To see this, let us consider the following variance composition of the VTE.
\begin{align}
    \mathrm{VTE} &= \Var{Y^{(1)}-Y^{(0)}} \nonumber\\
    &= \Var{Y^{(1)}} + \Var{Y^{(0)}} - 2\mathrm{Cov}[Y^{(1)}, Y^{(0)}] \label{eq:var-decompose-vte}
\end{align}
Although we can estimate the variance of potential outcomes $\Var{Y^{(1)}}, \Var{Y^{(0)}}$ from \cref{assum:ignorability} alone, we cannot identify the covariance $\mathrm{Cov}[Y^{(1)}, Y^{(0)}]$ since we cannot observe both potential outcomes at the same time. Indeed, we can explicitly construct the non-identifiable data process.

\begin{thm}\label{thm:nonidentifiability}
    There exists two data generation processes $P(X,A,Y^{(0)}, Y^{(1)})$ satisfying \cref{assum:ignorability} that have different VTE but the same observational distribution $P(X,A,Y)$.
\end{thm}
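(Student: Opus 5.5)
The plan is to reuse the two cases of the motivating example in \cref{sec:prob-settings}, adding a trivial covariate. Let $X$ be degenerate, say $X \equiv 0$, or any variable independent of everything else. Let $A \sim \mathrm{Bernoulli}(1/2)$ be drawn independently of $(X, Y^{(0)}, Y^{(1)})$. Then \cref{assum:ignorability} holds automatically in both processes. The two processes differ only in the joint law of the potential outcomes; the observed outcome is $Y = Y^{(A)}$ by no interference.

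\textbf{Process 1.} Take $Y^{(0)} \sim \mathcal{N}(0,1)$ and $Y^{(1)} = Y^{(0)}$.

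\textbf{Process 2.} Take $Y^{(0)}, Y^{(1)}$ i.i.d.\ $\mathcal{N}(0,1)$.

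The paper requires $Y \in [-R,R]$, so I would replace $\mathcal{N}(0,1)$ with a bounded law, e.g.\ $\mathrm{Uniform}[-1,1]$ with $R \ge 1$. The argument only uses that the marginal law is nondegenerate.

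The key step is checking that the observational distributions coincide. Since $X$ is trivial and $A$ is independent of $(Y^{(0)},Y^{(1)})$, we have
\begin{align*}
P(X, A=a, Y \in B) = P(X)\,P(A=a)\,P(Y^{(a)} \in B).
\end{align*}
This expression depends only on the marginal laws of $Y^{(0)}$ and $Y^{(1)}$. These marginals are the same uniform law in both processes, so $P(X,A,Y)$ is identical in the two processes.

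The final step computes the VTE via \eqref{eq:var-decompose-vte}. In Process 1, $Y^{(1)}-Y^{(0)} = 0$, so $\mathrm{VTE}=0$. In Process 2, the covariance term vanishes, so $\mathrm{VTE} = 2\Var{Y^{(0)}} = 2/3 > 0$ for the uniform choice. Hence the two VTEs differ.

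I do not expect a real obstacle. The only point needing care is that ignorability must hold in both processes. This is immediate because $A$ is independent of the potential-outcome pair, not merely of each potential outcome separately. One could optionally strengthen the result by letting $X$ be nontrivial and applying the same construction conditionally on each $x$; the unconditional version already suffices.
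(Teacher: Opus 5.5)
Your proposal is correct and uses the same construction as the paper. Treatment is randomized independently of everything, so ignorability is immediate; the covariate is irrelevant (the paper takes $X\sim\mathcal{N}(0,1)$ independent of the rest); and the two processes are different couplings of identical potential-outcome marginals, so $P(X,A,Y)$ coincides while $\mathrm{Cov}[Y^{(1)},Y^{(0)}]$ and hence the VTE differ. The differences are cosmetic: the paper pairs $Y^{(1)}=Y^{(0)}$ with the antithetic coupling $Y^{(1)}=-Y^{(0)}$ (VTE $0$ versus $4$) rather than your independent coupling, and it uses Gaussian marginals, whereas your uniform choice actually respects the standing assumption $\mathcal{Y}\subseteq[-R,R]$.
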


These data generation processes can be found in \cref{sec:proof-non-identifiability}. Therefore, we make the following additional assumption for the covariate $X$.
\begin{assum} \label{assum:discomposional}
\begin{align*}
    \expect{Y^{(0)} Y^{(1)}\middle| X} = \expect{Y^{(0)}\middle| X} \expect{Y^{(1)}\middle|X}
\end{align*}
\end{assum}

\cref{assum:discomposional} suggests that potential outcomes are \emph{uncorrelated} conditioned on $X$, meaning that all the dependence between potential outcomes is explained by the covariates. \cref{assum:discomposional} is a natural assumption whenever the covariates $X$ are sufficiently informative. For example, we would not expect the variability in vaccine effectiveness to be associated with variability in unvaccinated outcomes, so long as we observe all relevant risk factors in $X$.


Note that \cref{assum:discomposional} is automatically satisfied in the more restrictive case of independent (when conditioned on $X$) potential outcomes, but we only require \emph{uncorrelated} potential outcomes in general. 

Under \cref{assum:discomposional}, we can estimate VTE as follows.
\begin{thm} \label{thm:vte-identification}
    Let covariate $X$ satisfy both \cref{assum:ignorability,assum:discomposional}. Then, the the VTE can be estimated as 
    \begin{align*}
        \mathrm{VTE} &= \expect[X]{g_1(X) + g_0(X) -2f_1(X)f_0(X)}\\
        &\quad\quad\quad - \expect[X]{f_1(X) - f_0(X)}^2,
    \end{align*}
    where
    \begin{align*}
        f_a(X) &= \expect{Y | A=a, X}, \\
        g_a(X) &= \expect{Y^2| A=a, X}
    \end{align*}
    for $a \in \{0,1\}$.
\end{thm}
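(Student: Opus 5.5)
The plan is to expand $\mathrm{VTE}=\Var{Y^{(1)}-Y^{(0)}}$ into raw moments of the potential outcomes. Each moment will be written as an iterated expectation over $X$. Conditional on $X$, the moments will then be converted into observational regression functions using \cref{assum:ignorability}, the no-interference rule $Y=Y^{(a)}$ on $\{A=a\}$, and \cref{assum:discomposional}. Concretely, we write
\begin{align*}
\mathrm{VTE} = \expect{(Y^{(1)}-Y^{(0)})^2} - \expect{Y^{(1)}-Y^{(0)}}^2
= \expect{(Y^{(1)})^2} + \expect{(Y^{(0)})^2} - 2\expect{Y^{(1)}Y^{(0)}} - \expect{Y^{(1)}-Y^{(0)}}^2 .
\end{align*}
All of these moments are finite because $\mathcal{Y}\subseteq[-R,R]$. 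This lets us split and rearrange the expectations freely.

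First, by the tower property, each term becomes $\expect[X]{\expect{\,\cdot \mid X}}$. For the marginal terms we use ignorability: $Y^{(a)}\indepe A\mid X$. This gives $\expect{(Y^{(a)})^k\mid X}=\expect{(Y^{(a)})^k\mid A=a,X}$ for $k=1,2$. By no interference this equals $\expect{Y^k\mid A=a,X}$, so we get $f_a(X)$ for $k=1$ and $g_a(X)$ for $k=2$. It follows that $\expect{(Y^{(a)})^2}=\expect[X]{g_a(X)}$ and $\expect{Y^{(1)}-Y^{(0)}}=\expect[X]{f_1(X)-f_0(X)}$. This handles the squared-mean term.

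Next comes the cross term, which is where \cref{thm:nonidentifiability} shows ignorability alone fails. By \cref{assum:discomposional}, $\expect{Y^{(1)}Y^{(0)}\mid X}=\expect{Y^{(1)}\mid X}\expect{Y^{(0)}\mid X}$. By the previous step this equals $f_1(X)f_0(X)$. Hence $\expect{Y^{(1)}Y^{(0)}}=\expect[X]{f_1(X)f_0(X)}$. Substituting all pieces into the expansion yields exactly the stated formula.

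The main subtlety is not algebraic but measure-theoretic. For $f_a$ and $g_a$ to be well defined on the support of $X$, we need $\prob{A=a\mid X}>0$ almost surely. I would state this overlap/positivity condition explicitly as implicit in the definition of $f_a,g_a$. I would also note that the ignorability step only requires the marginal conditional independences $Y^{(a)}\indepe A\mid X$, which follow from the joint statement in \cref{assum:ignorability}.
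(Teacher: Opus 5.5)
Your proposal is correct and follows the paper's proof essentially step for step. You expand the variance into raw moments, convert $\mathbb{E}[(Y^{(a)})^k]$ into $\mathbb{E}_X[f_a(X)]$ or $\mathbb{E}_X[g_a(X)]$ via the tower property, ignorability and no interference, and reduce the cross term to $\mathbb{E}_X[f_1(X)f_0(X)]$ via \cref{assum:discomposional}; your explicit positivity condition $P(A=a\mid X)>0$ and the note that boundedness of $\mathcal{Y}$ guarantees finite moments are sensible additions that the paper leaves implicit.
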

Similarly, we can also identify the CVTE as follows.
\begin{thm} \label{thm:cvte-identification}
    Let covariate $X$ satisfies both \cref{assum:ignorability,assum:discomposional}. Then, for $V \indepe Y^{(1)}, Y^{(0)} | X$ the CVTE can be estimated as 
    \begin{align*}
        &\mathrm{CVTE}(v) \\
        &\quad = \expect[X]{g_1(X) + g_0(X) -2f_1(X)f_0(X)|V=v}\\
        &\quad\quad\quad - \expect[X]{f_1(X) - f_0(X)|V=v}^2,
    \end{align*}
    where $f_a, g_a$ are defined as \cref{thm:vte-identification}.
\end{thm}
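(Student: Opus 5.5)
The plan is to write $\mathrm{CVTE}(v) = \expect{(Y^{(1)}-Y^{(0)})^2 \mid V=v} - \expect{Y^{(1)}-Y^{(0)} \mid V=v}^2$ and identify each of the two terms separately. For both, I will use the tower property conditioning on $X$ inside the $V=v$ stratum, that is, $\expect{\cdot \mid V=v} = \expect{\expect{\cdot \mid X, V=v} \mid V=v}$. The hypothesis $V \indepe Y^{(1)}, Y^{(0)} \mid X$ then lets me drop $V$ from the inner expectation, since any function of $(Y^{(0)},Y^{(1)})$ satisfies $\expect{h(Y^{(0)},Y^{(1)}) \mid X, V=v} = \expect{h(Y^{(0)},Y^{(1)}) \mid X}$. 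This reduces everything to quantities conditioned on $X$ alone, averaged over the law of $X$ given $V=v$. This is exactly the meaning of $\expect[X]{\cdot \mid V=v}$ in the statement.

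For the first-moment term, I will use ignorability (\cref{assum:ignorability}) together with no interference to get $\expect{Y^{(a)} \mid X} = \expect{Y^{(a)} \mid A=a, X} = \expect{Y \mid A=a, X} = f_a(X)$. Hence $\expect{Y^{(1)}-Y^{(0)} \mid V=v} = \expect[X]{f_1(X)-f_0(X) \mid V=v}$. For the second-moment term, I expand $(Y^{(1)}-Y^{(0)})^2 = (Y^{(1)})^2 + (Y^{(0)})^2 - 2Y^{(1)}Y^{(0)}$. The same ignorability argument applied to $(Y^{(a)})^2$ gives $\expect{(Y^{(a)})^2 \mid X} = g_a(X)$. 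The cross term is the only place joint information about the two potential outcomes enters, and there I invoke \cref{assum:discomposional}: $\expect{Y^{(0)}Y^{(1)} \mid X} = \expect{Y^{(0)}\mid X}\expect{Y^{(1)}\mid X} = f_0(X)f_1(X)$. Combining these gives the first bracket in the statement, and subtracting the squared mean term finishes the proof.

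The only delicate step is justifying the removal of $V$ from the inner conditional expectation. I need the conditional independence to apply jointly to $(Y^{(0)},Y^{(1)})$, not just marginally, because the cross term involves both. The hypothesis as stated, $V \indepe Y^{(1)},Y^{(0)} \mid X$, is a joint statement, so this works. Ignorability is then applied only conditionally on $X$, without $V$, so no interaction between $V$ and $A$ is required.

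Two technical points need a brief check. First, conditioning on $A=a$ requires positivity, i.e. $0<P(A=1\mid X)<1$, which I will note as implicit so that $f_a,g_a$ are well defined. Second, all moments are finite because $\mathcal{Y}\subseteq[-R,R]$. The VTE result in \cref{thm:vte-identification} is the special case where $V$ is trivial, so the same argument covers it.
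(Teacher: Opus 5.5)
Your proposal is correct and follows the paper's argument. The paper proves the VTE case by expanding the variance, applying the tower property over $X$, using ignorability with no interference for $\expect{(Y^{(a)})^n}$, and invoking Assumption~\ref{assum:discomposional} for the cross term; it then says only that the CVTE case follows by a similar discussion. Your write-up supplies the step the paper leaves implicit: you use the joint conditional independence $V \indepe (Y^{(1)}, Y^{(0)}) \mid X$ to replace $\expect{\cdot \mid X, V=v}$ by $\expect{\cdot \mid X}$ before applying the same identities.
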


The proofs are presented in \cref{sec:proof-identifiability}. The major difference from VTE is that  we now take the conditional expectation $\expect[X|V=v]{\cdot}$, rather than the marginal expectation $\expect[X]{\cdot}$.

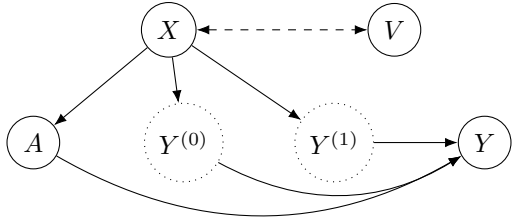
\begin{figure}
    \centering
    \begin{tikzpicture}
    \node[state] (D) at (0,0) {$A$};
    \node[state] (V) at (4.8, 1.5) {$V$};
    \node[dotstate] (Y0) at (2.0,0) {$Y^{(0)}$};
    \node[dotstate] (Y1) at (4.0,0) {$Y^{(1)}$};
    \node[state] (Y) at (6.0,0) {$Y$};
    \node[state] (U) at (1.8,1.5) {$X$};
    \path (U) edge (Y1);
    \path (U) edge (Y0);
    \path (U) edge (D);
    \path[bidirected] (U) edge (V);
    \path (Y1) edge (Y);
    \path[bend right] (Y0) edge (Y);
    \path[bend right] (D) edge (Y);
\end{tikzpicture}
    \caption{A causal graph satisfying the constraints of our setting. The dotted circles indicate that the potential outcomes are not observed.}
    \label{fig:causal-graph}
\end{figure}

The causal relationship is summarized in \cref{fig:causal-graph}, in which the bi-directional arrow between $X$ and $V$ means that we allow both directions or even a common ancestor variable. Note that \cref{assum:ignorability} prohibits having an arrow between treatment $A$ and potential outcomes $Y^{(0)}, Y^{(1)}$, and \cref{assum:discomposional} is satisfied when there is no arrow between potential outcomes $(Y^{(0)}, Y^{(1)})$, i.e. $Y^{(0)} \indepe Y^{(1)} | X$ \footnote{Technically, Assumption~\ref{assum:discomposional} allows arrows and dependence between the potential outcomes so long as the correlation, i.e., linear dependence, remains zero.}. One important example is where $V$ is a subset of $X$, which represents the variance of causal effect within a subpopulation that shares the same traits.

\begin{remark} \label{remark}
    We can rewrite the VTE in \cref{thm:vte-identification} as follows
    \begin{align*}
        \mathrm{VTE} &= \underbrace{\Var[X]{\expect{Y|A=1,X} - \expect{Y|A=0,X}}}_{\text{Variance in CATE}}\\
        &\quad\quad + \underbrace{\expect[X]{\Var{Y|A=1, X} + \Var{Y|A=0, X}}}_{\text{Exogenous uncertainty}}
    \end{align*}
    This can be interpreted as the variance decomposition in the treatment effect. The first term represents the variance in the conditional average treatment effect across different conditionings, i.e., variation due to the covariate $X$. The second term can be interpreted as the variance within the potential outcomes, which is uncertainty that is latent to our study.
    
    Note that if we estimate the CATE for each $X$, then the variance of these CATEs across different $X$ can be used to estimate the first term. This captures \emph{endogenous} variation in the treatment effect, e.g., variation in the treatment effect that is explained by observed variables. The full measurement of the VTE, however, requires the computation of \emph{exogenous} uncertainty appearing in the second term.
\end{remark}

\section{Estimation} \label{sec:estimation}

In this section, we propose empirical estimation methods for VTE and CVTE based on kernel methods. We first review the kernel ridge regression, and then present the closed-form expression of estimators.

\subsection{Preliminary}

\paragraph{Reproducing Kernel Hilbert Space} We assume that $f_a(x), g_a(x)$ is in a Reproducing Kernel Hilbert Space (RKHS) denoted as $\calH_\calX$, defined by a positive semidefinite $k: \mathcal{X}\times\mathcal{X}\to\mathbb{R}$. We assume all kernel functions are characteristic \cite{SriGreFukLanetal10} and bounded. We denote canonical features as $\phi$ and the inner product and the norm in $\calH_\calX$ as $\braket[\calH_\calX]{\cdot, \cdot}$  and $\|\cdot\|_{\calH_\calX}$, respectively. We also define the tensor product as $\otimes$, which induces the Hilbert-Schmidt operator $h_1 \otimes h_2:\calH_2\to\calH_1$ for $h_1\in\calH_1, h_2\in\calH_2$, such that $(h_1 \otimes h_2)h'_2 = \braket{h_2,h'_2}h_1$ for $h'_2\in\calH_2$. 

\paragraph{Estimator of outcome models} We use kernel ridge regression (KRR) to model $f_a, g_a$ defined in \cref{thm:vte-identification}. Given the whole dataset $\{x_i, y_i, a_i\}_{i=1}^n$, let us denote the subset of untreated data $a_i = 0$ as $\{x^{(0)}_i, y^{(0)}_i\}_{i=1}^{n_0}$ and the subset of treated data $a_i = 1$ as $\{x^{(1)}_i, y^{(1)}_i\}_{i=1}^{n_1}$. Then, KRR estimators $\hat{f}_a, \hat{g}_a$ for $a \in \{0,1\}$ are given as follows.
\begin{align}
    \hat{f}_a(x) &= \vec{k}_a(x)^\top\left(K_a + n_a\lambda_{f_a}I_{n_a}\right)^{-1} \vec{y}_a \label{eq:f-krr},\\
    \hat{g}_a(x) &= \vec{k}_a(x)^\top\left(K_a + n_a\lambda_{g_a}I_{n_a}\right)^{-1} \vec{y}^2_a, \label{eq:g-krr}
\end{align}
where $I_{n_a}$ is the identity matrix size of $n_a$, and $K_a$ is the kernel matrix for untreated/treated subsets, i.e.
\begin{align*}
    K_a &= (k(x^{(a)}_i,x^{(a)}_j))_{ij} \in \mathbb{R}^{n_a\times n_a},
\end{align*}
and vectors are 
\begin{align*}
    &\vec{k}_a(x) = [k(x, x^{(a)}_1), k(x, x^{(a)}_2), \dots, k(x, x^{(a)}_{n_a})]^\top,\\
    &\vec{y}_a = [y^{(a)}_1, \dots,y^{(a)}_{n_a}]^\top, \quad \vec{y}^2_a = [(y^{(a)}_1)^2, \dots,(y^{(a)}_{n_a})^2]^\top.
\end{align*}
Note that $\hat{f}_a$ is a standard KRR learning from the subset of the untreated $(a_i=0)$ or treated $(a_i=1)$ data points, and $g_a$ is the same except using the squared outcomes as the target of the prediction. The convergence of such models are widely studied \citep{Fischer2020Sobolev, Vito2005,Smale2007LearningTE}.

\begin{prop}[{\citealp[Theorem 1]{Fischer2020Sobolev}}]   \label{prop:krr-convergence}
    If $f_a, g_a \in \calH_\calX$ for $a\in\{0,1\}$, under a certain regularity conditions, we have
    \begin{align*}
        \|\hat{f}_a(\cdot) - f_a(\cdot)\|_{\calH_\calX} \to 0,~\|\hat{g}_a(\cdot) - g_a(\cdot)\|_{\calH_\calX} \to 0
    \end{align*}
    when $\lambda_{f_a}, \lambda_{g_a}$ is set adaptively to the datasize.
\end{prop}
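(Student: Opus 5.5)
This proposition is cited from \citet[Theorem 1]{Fischer2020Sobolev}, so the proof is a reduction: show that each of the four regressions $\hat f_0,\hat f_1,\hat g_0,\hat g_1$ is a standard kernel ridge regression problem satisfying the hypotheses of that theorem, and then read off its conclusion for the RKHS norm, i.e.\ the case $\gamma=1$ of their interpolation-norm bound. The first step is to make the regression problem explicit. Fix $a\in\{0,1\}$ and condition on the treated/untreated split. The pairs $\{x^{(a)}_i,y^{(a)}_i\}_{i=1}^{n_a}$ are i.i.d.\ draws from $P(X,Y\mid A=a)$. Under this distribution the regression function of $Y$ on $X$ is exactly $f_a$, and that of $Y^2$ on $X$ is exactly $g_a$, by the definitions in \cref{thm:vte-identification}. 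The estimators \eqref{eq:f-krr} and \eqref{eq:g-krr} are then precisely the closed-form KRR solutions with regularisation $\lambda_{f_a}$ and $\lambda_{g_a}$ on this sample, which we verify via the representer theorem.

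Next we check the assumptions of \citet{Fischer2020Sobolev}. The moment condition on the noise holds trivially, because $Y\in[-R,R]$ implies $|Y|\le R$ and $Y^2\le R^2$, so the noise is bounded. The kernel is bounded by assumption. The source condition is given: $f_a,g_a\in\calH_\calX$ is the source condition with exponent $\beta=1$, and membership in $\calH_\calX$ suffices for an $\calH_\calX$-norm result. The remaining requirements, namely an eigenvalue decay exponent $p$ for the integral operator under $P(X\mid A=a)$ and the embedding property, are what the statement calls ``certain regularity conditions''. We take these as assumptions, stated relative to the marginal $P(X\mid A=a)$.

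We then apply the theorem with $\beta=1\ge\gamma=1$. Choosing $\lambda_{n_a}\asymp n_a^{-1/(\beta+p)}$ (or the log-adjusted choice in their boundary case) gives $\|\hat f_a-f_a\|_{\calH_\calX}^2\lesssim \tau^2 n_a^{-(\beta-\gamma)/(\beta+p)}$ with high probability. Because $\beta=\gamma$, this bound gives only boundedness, not decay. To obtain convergence to zero, we use instead the consistency part of the standard analysis: decompose
\[
\hat f_a-f_a=(\hat f_a-f_{a,\lambda})+(f_{a,\lambda}-f_a),
\]
where $f_{a,\lambda}$ is the population regularised solution. The bias term $\|f_{a,\lambda}-f_a\|_{\calH_\calX}\to0$ as $\lambda\to0$ by spectral calculus, since $f_a\in\calH_\calX$; the convergence can be arbitrarily slow. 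The variance term is bounded in $\calH_\calX$ by $O\bigl(1/(\lambda\sqrt{n_a})\bigr)$ by operator concentration, following \citet{Smale2007LearningTE}. Taking $\lambda\to0$ with $\lambda\sqrt{n_a}\to\infty$ sends both terms to zero in probability.

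Finally, $n_a\to\infty$ almost surely provided $P(A=a)>0$, which is a positivity requirement that we list among the regularity conditions. The identical argument with target $Y^2$ handles $g_a$. The main obstacle is the $\gamma=1$ boundary issue described above: the rate theorem itself only yields boundedness, so the proof must rely on the bias--variance consistency argument rather than quoting a rate directly.
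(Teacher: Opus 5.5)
Your argument is correct, but it takes a different route from the paper. In \cref{prop:detail-krr} the paper strengthens the hypothesis to $f_a,g_a\in\calH^\beta_\calX$ with $\beta\in(1,2)$, and adds polynomial eigenvalue decay and bounded residuals. It then reads off the Fischer--Steinwart bound in the $\calH_\calX$ norm with $\lambda\asymp n_a^{-1/(\beta+p)}$. The paper needs this explicit polynomial rate because \cref{thm:vte-emp-sol,thm:cvte-emp-sol} claim that the VTE/CVTE estimators inherit the KRR rate.

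You keep the literal hypothesis $f_a,g_a\in\calH_\calX$, i.e.\ $\beta=1$. You correctly observe that the rate theorem is vacuous there; in fact it requires $\gamma<\beta$, so it does not even apply at $\gamma=\beta=1$, though you discard that step anyway. You then prove bare consistency through the bias--variance split: $\|\lambda(T_a+\lambda)^{-1}f_a\|_{\calH_\calX}\to0$ together with $\|\hat f_a-f_{a,\lambda}\|_{\calH_\calX}=O_p\bigl(1/(\lambda\sqrt{n_a})\bigr)$.

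This gives consistency under weaker assumptions. Your final argument never uses the eigenvalue decay or embedding property you listed; it needs only a bounded kernel, bounded $Y$, and positivity. The cost is that it gives no rate, so it would not support the downstream rate claims. Your boundary observation also exposes something in the paper. The $\gamma=1$ exponent from Fischer--Steinwart is $(\beta-1)/(2(\beta+p))$, which is why $\beta>1$ is needed. The exponent $\beta/(2(\beta+p))$ displayed in \cref{prop:detail-krr} is the $L^2$ ($\gamma=0$) one.

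One small point needs tightening. Let $T_a$ be the covariance operator under $P(X\mid A=a)$. Then $f_{a,\lambda}-f_a=-\lambda(T_a+\lambda)^{-1}f_a$ converges to $-P_{\ker T_a}f_a$, not to $0$, so ``by spectral calculus, since $f_a\in\calH_\calX$'' is not quite enough. Because $f_a$ is determined only $P(X\mid A=a)$-a.e., you should take its minimum-norm representative in $\calH_\calX$. That is the one in $(\ker T_a)^\perp$, which is also where $\hat f_a$ lies almost surely. Alternatively, assume $T_a$ is injective, e.g.\ a continuous kernel with $P(X\mid A=a)$ of full support. Under the paper's $\beta>1$ this issue does not arise: the source condition places $f_a$ in the range of a positive power of $T_a$, which lies in $(\ker T_a)^\perp$.
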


The complete version of \cref{prop:krr-convergence} can be found in \cref{sec:proof-vte-estimation}, in which we present the detailed regularity conditions and the rate of convergence.
We use this result to construct consistent VTE and CVTE estimators as follows.

\subsection{VTE Estimator}
Assuming we can have consistent outcome models $\hat{f}_a, \hat{g}_a$, the VTE can be estimated by taking the empirical average over the data sample.

\begin{thm}\label{thm:vte-emp-sol}
   Given dataset $\{x_i, y_i, a_i\}_{i=1}^n$, the estimator of the VTE defined as
   \begin{align*}
       \widehat{\mathrm{VTE}} &= \frac1n \sum_{i=1}^n \hat{g}_1(x_i) +  \hat{g}_0(x_i) - 2 \hat{f}_1(x_i)\hat{f}_0(x_i)\\
       &\quad - \left(\frac1n \sum_{i=1}^n \hat{f}_1(x_i) - \hat{f}_0(x_i)\right)^2
   \end{align*}
   converges to the true VTE under the conditions of \cref{thm:vte-identification} and \cref{prop:krr-convergence},
   where $\hat{f}_1, \hat{f}_0, \hat{g}_1, \hat{g}_0$ are the KRR models defined in \eqref{eq:f-krr} and \eqref{eq:g-krr}. The convergence rate is the same as the convergence rate of KRR in \cref{prop:krr-convergence}.
\end{thm}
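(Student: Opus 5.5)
The plan is to split the error $\widehat{\mathrm{VTE}} - \mathrm{VTE}$ into a \emph{plug-in error}, from replacing $f_a, g_a$ by $\hat f_a, \hat g_a$, and a \emph{sampling error}, from replacing $\mathbb{E}_X$ by the empirical average over $x_1,\dots,x_n$. Then I control each one by one of two tools: the RKHS-norm convergence in \cref{prop:krr-convergence}, or a standard law of large numbers / concentration bound for bounded variables.

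By \cref{thm:vte-identification}, $\mathrm{VTE} = \mathbb{E}_X[h(X)] - (\mathbb{E}_X[d(X)])^2$, where $h = g_1+g_0-2f_1f_0$ and $d = f_1-f_0$. Write $\hat h, \hat d$ for the plug-in versions and $\mathbb{E}_n$ for the empirical average. The first step is the decomposition
\begin{align*}
\mathbb{E}_n[\hat h] - \mathbb{E}_X[h] = \mathbb{E}_n[\hat h - h] + (\mathbb{E}_n - \mathbb{E}_X)[h].
\end{align*}
For the squared term I use $a^2-b^2=(a-b)(a+b)$. The same decomposition then applies to $\mathbb{E}_n[\hat d]-\mathbb{E}_X[d]$, multiplied by the bounded factor $\mathbb{E}_n[\hat d]+\mathbb{E}_X[d]$.

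The plug-in terms are handled by the reproducing property together with boundedness of the kernel, $\sup_x k(x,x)\le \kappa^2$. This gives $\|\hat f_a - f_a\|_\infty \le \kappa \|\hat f_a - f_a\|_{\calH_\calX}$, and likewise for $g_a$. For the product term I write
\begin{align*}
\hat f_1\hat f_0 - f_1 f_0 = (\hat f_1 - f_1)\hat f_0 + f_1(\hat f_0 - f_0).
\end{align*}
Here $\|f_1\|_\infty \le R$ since $|Y|\le R$, and $\|\hat f_0\|_\infty \le R + \kappa\|\hat f_0 - f_0\|_{\calH_\calX}$ stays bounded. Hence $\sup_x|\hat h(x)-h(x)|$ and $\sup_x|\hat d(x)-d(x)|$ are bounded by constants times $\sum_a(\|\hat f_a-f_a\|_{\calH_\calX}+\|\hat g_a-g_a\|_{\calH_\calX})$. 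This bound is uniform in $x$, so it holds no matter that $\hat f_a, \hat g_a$ were fitted on the same points $x_i$ at which they are evaluated. That is why I use sup-norm control rather than an $L^2$ bound.

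The sampling terms $(\mathbb{E}_n-\mathbb{E}_X)[h]$ and $(\mathbb{E}_n-\mathbb{E}_X)[d]$ involve the fixed functions $h,d$, which are bounded by $4R^2$ and $2R$. By Hoeffding's inequality they are $O_p(n^{-1/2})$.

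Combining the two parts gives $|\widehat{\mathrm{VTE}}-\mathrm{VTE}| = O_p\bigl(\sum_a \|\hat f_a - f_a\|_{\calH_\calX}+\|\hat g_a-g_a\|_{\calH_\calX} + n^{-1/2}\bigr)$. The main obstacle I expect is matching the claim that the rate equals the KRR rate. For this I need the KRR rate from the full version of \cref{prop:krr-convergence} to be no faster than $n^{-1/2}$, which holds for the RKHS-norm rates of \citet{Fischer2020Sobolev}. I also need $n_a/n$ bounded away from zero, for instance via an overlap condition $P(A=a)>0$, so that rates in $n_a$ translate to rates in $n$. I will make both points explicit, after which the sampling term is absorbed and the stated rate follows.
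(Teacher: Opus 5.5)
Your proposal is correct and follows the paper's proof essentially step for step. Both use the same split into a plug-in error, controlled uniformly over the sample points via $|\hat{f}_a(x)-f_a(x)|\le\kappa\|\hat{f}_a-f_a\|$ and the decomposition $\hat{f}_1\hat{f}_0-f_1f_0=(\hat{f}_1-f_1)\hat{f}_0+f_1(\hat{f}_0-f_0)$, plus a Hoeffding bound on the sampling error of the fixed functions, with $a^2-b^2=(a-b)(a+b)$ handling the squared term.

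The differences are small and in your favor:
\begin{itemize}
\item You bound $|\hat{f}_0(x_i)|$ by $R+\kappa\|\hat{f}_0-f_0\|$. The paper instead invokes $\|\hat{f}_a\|\le\|f_a\|$, which holds for the population regularized solution but not deterministically for the empirical KRR estimator.
\item You make explicit why the $O_p(n^{-1/2})$ term is absorbed: the KRR exponent $\beta/(2(\beta+p))$ is below $1/2$ and $n_a\le n$. You also note that $n_a\to\infty$ requires overlap; the paper leaves both points implicit.
\end{itemize}
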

The proof is shown in \cref{sec:proof-vte-estimation}, which combines the
convergence results of KRR \cref{prop:krr-convergence} and Hoeffding's inequality. The time complexity of $O(n^3)$, which can be reduced by the usual Cholesky or Nystr{\"o}m techniques.

\subsection{CVTE Estimator}
Unlike VTE, CVTE requires computing the conditional expectation $\expect[X|V]{\cdot}$. To do this, we use \emph{conditional mean embedding} \citep{Song2009Hilbert,Gre2012CME,Li2022optimal,Park2020CME}.

\begin{lem}\label{lem:cme-to-cvte}
    Assume $f_a, g_a \in \mathcal{H}_\calX$. Under the condition of \cref{thm:cvte-identification} 
    \begin{align*}
        \mathrm{CVTE}(v)  &= \braket[\calH_\calX]{{g}_1 + {g}_0, {\mu}_{X|V}(v)}\\
        &\quad - 2\braket[\calH_\calX]{{f}_1, {C}_{XX|V}(v){f}_0} \\
        &\quad\quad - \braket[\calH_\calX]{{f}_1-{f}_0, {\mu}_{X|V}(v)}^2
    \end{align*}
    for 
    \begin{align*}
        {\mu}_{X|V}(v) &= \expect{\phi(X)|V=v},\\
        C_{XX|V}(v) &= \expect{\phi(X) \otimes \phi(X)|V=v}
    \end{align*}
\end{lem}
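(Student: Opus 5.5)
The plan is to start from the identification formula of \cref{thm:cvte-identification} and rewrite each of its three conditional expectations as an RKHS inner product. Two tools suffice: the reproducing property $h(x) = \braket[\calH_\calX]{h, \phi(x)}$ for $h\in\calH_\calX$, and the fact that Bochner integration commutes with bounded linear functionals. Bochner integrability of $\phi(X)$ and of $\phi(X)\otimes\phi(X)$ follows from the boundedness of $k$, since $\|\phi(x)\|_{\calH_\calX}^2 = k(x,x) \le \kappa$ and $\|\phi(x)\otimes\phi(x)\|_{HS} = k(x,x) \le \kappa$. Consequently $\mu_{X|V}(v)$ and $C_{XX|V}(v)$ are well defined, the former in $\calH_\calX$ and the latter as a Hilbert--Schmidt operator.

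The three terms are handled in order.

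\textbf{First term.} Since $g_1+g_0\in\calH_\calX$, we have
\[
\expect{g_1(X)+g_0(X)\mid V=v} = \expect{\braket[\calH_\calX]{g_1+g_0,\phi(X)}\mid V=v} = \braket[\calH_\calX]{g_1+g_0,\mu_{X|V}(v)}.
\]
The second equality moves the conditional expectation inside the continuous linear functional $\braket[\calH_\calX]{g_1+g_0,\cdot}$.

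\textbf{Third term.} The same argument gives $\expect{f_1(X)-f_0(X)\mid V=v} = \braket[\calH_\calX]{f_1-f_0,\mu_{X|V}(v)}$. Squaring this yields the last term of the lemma.

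\textbf{Cross term.} Write
\[
f_1(x)f_0(x) = \braket[\calH_\calX]{f_1,\phi(x)}\braket[\calH_\calX]{\phi(x),f_0} = \braket[\calH_\calX]{f_1,(\phi(x)\otimes\phi(x))f_0},
\]
using the definition $(h_1\otimes h_2)h_2' = \braket{h_2,h_2'}h_1$. The map $T\mapsto \braket[\calH_\calX]{f_1,Tf_0}$ is a bounded linear functional on Hilbert--Schmidt operators, with norm at most $\|f_1\|\|f_0\|$. Interchanging it with the conditional expectation gives $\expect{f_1(X)f_0(X)\mid V=v} = \braket[\calH_\calX]{f_1,C_{XX|V}(v)f_0}$.

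Substituting these three identities into \cref{thm:cvte-identification} gives the claim. The hypothesis $V \indepe Y^{(1)},Y^{(0)}\mid X$, together with \cref{assum:ignorability,assum:discomposional}, is used only through that theorem.

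The main obstacle is justifying the interchange of conditional expectation and inner product. This requires the conditional expectation of a Hilbert-space-valued random variable to be well defined, which means fixing a regular conditional distribution $P(X\mid V=v)$ and using Bochner integrals with respect to it. Once that is fixed, the interchange is the standard property that bounded linear maps commute with Bochner integrals. The integrability conditions hold by the boundedness of $k$, and the conditional expectations of $f_a$ and $g_a$ are finite because these functions are bounded: $|h(x)|\le \|h\|_{\calH_\calX}\sqrt{\kappa}$.
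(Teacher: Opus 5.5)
Your proposal is correct and follows the same route as the paper: start from the identification formula of \cref{thm:cvte-identification}, use the reproducing property to write $g_1+g_0$, $f_1-f_0$ and $f_1f_0 = \langle f_1, (\phi(X)\otimes\phi(X))f_0\rangle$ as inner products, and pull the conditional expectation inside each inner product. The only difference is that you justify this interchange explicitly, via Bochner integrability from the bounded kernel, a regular conditional distribution, and continuity of the relevant linear functionals; the paper simply appeals to linearity.
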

The proof can be found in \cref{sec:proof-vte-estimation}, which uses the linearity of expectation and the reproducing property.
Here, ${\mu}_{X|V}(v), C_{XX|V}(v)$ are the \emph{conditional mean embedding} \citep{Song2009Hilbert,Gre2012CME,Li2022optimal,Park2020CME}. Under the regularity condition, it is shown that this can be estimated as a solution of the least-squares problem.

\begin{prop}[{\citealp[Thoerem 1]{singh2019kernel}}]  \label{prop:cme}
    Under some regularity conditions, given data $\{x_i, v_i\}_{i=1}^n$, we have 
    \begin{align*}
        &\left\| \mu_{X|V}(v) -  \sum_{i=1}^n w_i(v) \phi(x_i)\right\|_{\mathcal{H}_\calX} \to 0,\\
        &\left\| C_{XX|V}(v) -  \sum_{i=1}^n w_i(v) \phi(x_i)\otimes \phi(x_i) \right\|_{\mathcal{H}_\calX} \to 0,
    \end{align*}
    for all $v \in \mathcal{V}$, where $w_i(v)$ is the $i$-th element of the vector
    \begin{align*}
        \vec{w}(v) = \left(K_V + n\lambda_V I\right)^{-1}\vec{k}_V(v).
    \end{align*}
    Here, $K_V, \vec{k}_V$ are defined as
    \begin{align*}
        &K_V = (k_V(v_i, v_j))_{ij} \in \mathbb{R}^{n\times n},\\
        &\vec{k}_V(v) = [k_V(v_1, v), k_V(v_2, v), \dots, k_V(v_n, v)]^\top \in \mathbb{R}^{n},
    \end{align*}
    for kernel function $k_V:\mathcal{V} \times \mathcal{V} \to \mathbb{R}$.
\end{prop}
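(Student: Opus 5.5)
The statement to prove is Proposition~\ref{prop:cme}, which the paper cites from \citet{singh2019kernel}. I would reduce it to a vector-valued kernel ridge regression consistency result, applied once for each of the two embeddings.

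\textbf{Step 1: write each embedding as a regression.} Let $E:\calH_\mathcal{V}\to\calH_\calX$ be the operator that minimises
\[
\expect{\|\phi(X)-E\phi_V(V)\|^2_{\calH_\calX}} .
\]
Under the regularity condition that $\expect{h(X)|V=\cdot}\in\calH_\mathcal{V}$ for all $h\in\calH_\calX$, this minimiser satisfies $\mu_{X|V}(v)=E\phi_V(v)$. The empirical regularised problem is
\[
\frac1n\sum_{i=1}^n\|\phi(x_i)-E\phi_V(v_i)\|^2+\lambda_V\|E\|_{HS}^2 .
\]
By the representer theorem, or directly from the normal equations, its solution is
\[
\hat E=\Phi_X(K_V+n\lambda_V I)^{-1}\Phi_V^\top ,
\]
so $\hat E\phi_V(v)=\sum_i w_i(v)\phi(x_i)$, which is exactly the stated weighted sum.

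For $C_{XX|V}$ I would run the same argument with the output feature $\phi(X)\otimes\phi(X)$. This feature lives in the tensor-product RKHS $\calH_\calX\otimes\calH_\calX$, and that space is again a Hilbert space with a bounded kernel $k^2$. The closed form is therefore identical and uses the same weights $w_i(v)$.

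\textbf{Step 2: decompose the error.} Let $E_\lambda$ be the population regularised solution. I would split
\[
\|\hat E-E\|_{HS}\le\|\hat E-E_\lambda\|_{HS}+\|E_\lambda-E\|_{HS} .
\]
For the bias term, I would assume a source condition $E\in$ range of $T_V^{(c-1)/2}$, with $T_V=\expect{\phi_V(V)\otimes\phi_V(V)}$ and $c\in(1,2]$. Spectral calculus then gives $\|E_\lambda-E\|_{HS}=O(\lambda^{(c-1)/2})$.

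For the variance term, I would write
\[
\hat E-E_\lambda=(\hat C_{XV}-E_\lambda\hat T_V)(\hat T_V+\lambda)^{-1}-\ldots
\]
and bound it with a Hilbert-space Bernstein/Hoeffding inequality for the i.i.d.\ HS-valued terms. This is possible because the kernels are bounded, so every summand is bounded. The resulting bound is $O_p(1/(\sqrt n\,\lambda))$.

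\textbf{Step 3: choose the regulariser and conclude.} Choosing $\lambda_V=n^{-1/(c+1)}$ balances the two terms and gives $\|\hat E-E\|_{HS}\to0$ in probability. Uniformly in $v$,
\[
\|\hat\mu(v)-\mu(v)\|\le\|\hat E-E\|_{op}\|\phi_V(v)\|\le\sqrt{\kappa_V}\,\|\hat E-E\|_{HS}\to0 .
\]
The same bound holds for $C_{XX|V}$ in the HS norm on $\calH_\calX\otimes\calH_\calX$.

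\textbf{Main obstacle.} I expect Step 2's variance bound to be the hardest part. It needs a careful perturbation of $(\hat T_V+\lambda)^{-1}$ against $(T_V+\lambda)^{-1}$, with concentration of $\hat T_V$ in operator norm. I would handle it with the standard identity
\[
A^{-1}-B^{-1}=A^{-1}(B-A)B^{-1}
\]
together with $\|(T+\lambda)^{-1}\|\le\lambda^{-1}$, following \citet{singh2019kernel}. I would also verify that the source condition transfers to the tensor-feature problem, since it is an assumption about conditional expectations of $\calH_\calX\otimes\calH_\calX$ functions and must hold for the product features too.
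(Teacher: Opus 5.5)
Your proposal is correct and follows the same route as the paper. The paper does not prove the result itself; it invokes Theorem~1 of \citet{singh2019kernel}, which is precisely this vector-valued ridge-regression analysis under a Hilbert--Schmidt source condition of order $c\in(1,2]$, with bias $O(\lambda^{(c-1)/2})$, variance $O_p(1/(\sqrt{n}\lambda))$ and $\lambda_V\asymp n^{-1/(c+1)}$, and it then asserts the $C_{XX|V}$ case ``similarly'', so your point that the tensor-feature source condition must be assumed separately is a fair sharpening rather than a gap. On your ``main obstacle'', keep the grouping $C_{XV}(T_V+\lambda)^{-1}=E_\lambda$ with $\|E_\lambda\|_{HS}\le\|E\|_{HS}$: then $\hat E-E_\lambda=[(\hat C_{XV}-E_\lambda\hat T_V)-(C_{XV}-E_\lambda T_V)](\hat T_V+\lambda)^{-1}$ needs only $\|(\hat T_V+\lambda)^{-1}\|\le\lambda^{-1}$ and one Hilbert-space Hoeffding bound, whereas applying $\|(T+\lambda)^{-1}\|\le\lambda^{-1}$ to both resolvents gives only $O_p(1/(\sqrt{n}\lambda^2))$, which does not vanish at $\lambda_V=n^{-1/(c+1)}$.
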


The complete version of \cref{prop:cme} can be found in \cref{sec:proof-vte-estimation}. Using this estimator of conditional mean embeddings, we can construct the empirical estimator of CVTE as follows.

\begin{thm} \label{thm:cvte-emp-sol}
   Given dataset $\{x_i, y_i, a_i, v_i\}_{i=1}^n$, the following estimator, $\widehat{\mathrm{CVTE}}$, converges to the true CVTE under the condition of \cref{thm:vte-identification,prop:krr-convergence,prop:cme}.
\begin{align*}
     &\widehat{\mathrm{CVTE}}(v) = \\
     &\quad\sum_{i=1}^n w_i(v)\left(\hat{g}_1(x_i) +  \hat{g}_0(x_i) - 2 \hat{f}_1(x_i)\hat{f}_0(x_i)\right)\\
       &\qquad - \left(\sum_{i=1}^n  w_i(v) \left(\hat{f}_1(x_i) - \hat{f}_0(x_i)\right)\right)^2
\end{align*}
for the KRR models $\hat{f}_1, \hat{f}_0, \hat{g}_1, \hat{g}_0$ defined in \eqref{eq:f-krr}, \eqref{eq:g-krr}, respectively, and $w_i(v)$ are the weights for conditional mean embedding defined in \cref{prop:cme}. The convergence rate is equal to the slower rate of KRR estimation in \cref{prop:krr-convergence} or the conditional mean embedding estimation in \cref{prop:cme}.
\end{thm}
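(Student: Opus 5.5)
The plan is to rewrite $\widehat{\mathrm{CVTE}}(v)$ in the same RKHS form as \cref{lem:cme-to-cvte}, with every population object replaced by its estimate. Then the error splits into terms, each controlled by \cref{prop:krr-convergence} or \cref{prop:cme}. Write
\[
\hat\mu(v)=\sum_{i=1}^n w_i(v)\phi(x_i), \qquad \hat C(v)=\sum_{i=1}^n w_i(v)\,\phi(x_i)\otimes\phi(x_i).
\]
By the reproducing property, $\sum_i w_i(v)\hat g_a(x_i)=\braket[\calH_\calX]{\hat g_a,\hat\mu(v)}$ and $\sum_i w_i(v)\hat f_1(x_i)\hat f_0(x_i)=\braket[\calH_\calX]{\hat f_1,\hat C(v)\hat f_0}$. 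Hence
\[
\widehat{\mathrm{CVTE}}(v)=\braket[\calH_\calX]{\hat g_1+\hat g_0,\hat\mu(v)}-2\braket[\calH_\calX]{\hat f_1,\hat C(v)\hat f_0}-\braket[\calH_\calX]{\hat f_1-\hat f_0,\hat\mu(v)}^2,
\]
which matches the expression for $\mathrm{CVTE}(v)$ in \cref{lem:cme-to-cvte} term by term. This lemma already uses the identification of \cref{thm:cvte-identification}, so no further causal argument is needed.

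Next, bound the three differences by adding and subtracting mixed terms and applying Cauchy--Schwarz.

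\textbf{First term.} We have $|\braket{\hat g,\hat\mu}-\braket{g,\mu}|\le \|\hat g-g\|\,\|\hat\mu\|+\|g\|\,\|\hat\mu-\mu\|$.

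\textbf{Second term.} Write $\braket{\hat f_1,\hat C\hat f_0}-\braket{f_1,Cf_0}$ as
\[
\braket{\hat f_1-f_1,\hat C\hat f_0}+\braket{f_1,\hat C(\hat f_0-f_0)}+\braket{f_1,(\hat C-C)f_0}.
\]
This is bounded by
\[
\|\hat f_1-f_1\|\,\|\hat C\|_{op}\,\|\hat f_0\|+\|f_1\|\,\|\hat C\|_{op}\,\|\hat f_0-f_0\|+\|f_1\|\,\|f_0\|\,\|\hat C-C\|_{HS},
\]
using that the operator norm is at most the Hilbert--Schmidt norm.

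\textbf{Third term.} Use $|s^2-t^2|=|s-t|\,|s+t|$, where $s-t$ is bounded exactly as in the first term.

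The main obstacle is making sure the multiplicative factors $\|\hat\mu(v)\|$, $\|\hat C(v)\|_{op}$, $\|\hat f_a\|$ and $|s+t|$ stay bounded. I would handle this by the triangle inequality around the population objects. Since $k$ is bounded by some $\kappa$, we have $\|\mu(v)\|\le\sqrt\kappa$ and $\|C(v)\|_{HS}\le\kappa$. Together with \cref{prop:cme}, this gives $\|\hat\mu(v)\|\le\sqrt\kappa+o(1)$ and $\|\hat C(v)\|_{HS}\le\kappa+o(1)$. Likewise $\|\hat f_a\|\le\|f_a\|+o(1)$ by \cref{prop:krr-convergence}, where $f_a,g_a\in\calH_\calX$ have finite norms by assumption. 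Then $|s+t|$ is bounded by $2\sqrt\kappa(\|f_1\|+\|f_0\|)+o(1)$. The statements hold in probability, with the high-probability events of the two propositions combined by a union bound. One subtlety: \cref{prop:cme} is stated with the weights fit on all $n$ points, while the KRR fits use the subsets $n_a$. The two estimators can simply be treated separately, since the bound above never needs them to be independent.

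Collecting terms gives
\[
|\widehat{\mathrm{CVTE}}(v)-\mathrm{CVTE}(v)|\lesssim \sum_a\bigl(\|\hat f_a-f_a\|+\|\hat g_a-g_a\|\bigr)+\|\hat\mu(v)-\mu(v)\|+\|\hat C(v)-C(v)\|_{HS},
\]
with constants depending only on $\kappa$ and the RKHS norms of $f_a,g_a$. This yields consistency, and a rate equal to the slower of the KRR rate from \cref{prop:krr-convergence} and the conditional mean embedding rate from \cref{prop:cme}.
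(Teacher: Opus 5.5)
Your proposal is correct and follows the paper's proof essentially step for step. You rewrite $\widehat{\mathrm{CVTE}}(v)$ via the reproducing property in the RKHS form of the CVTE lemma, then bound the three differences by add-and-subtract with Cauchy--Schwarz, using the difference of squares for the last term; your triangle-inequality control of $\|\hat\mu(v)\|$, $\|\hat C(v)\|$ and $\|\hat f_a\|$ is, if anything, more careful than the paper, which leaves these factors implicitly bounded (and elsewhere relies on the claim $\|\hat f_a\|\le\|f_a\|$).
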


The proof can be found in \cref{sec:proof-vte-estimation}. The only difference between the CVTE estimator and the VTE estimator in \cref{thm:vte-emp-sol} is that the VTE estimator averages over samples with equal weights, while CVTE considers the \emph{weighted} average induced from the conditional mean embedding $w_i(v)$. Therefore, compared to the VTE estimator, CVTE requires $O(n^3)$ additional operations to obtain weights $w_i(v)$, which can be accelerated again by the usual Cholesky or Nystr{\"o}m techniques.

Note that solutions in \cref{thm:cvte-emp-sol} cannot be used for the case where $V \subseteq X$, since it does not satisfy the assumptions in \cref{prop:cme}. We will present an alternative solution for such cases in \cref{sec:emp-cvte-identity}.

\section{Experiments} \label{sec:experiment}

In this section, we report the empirical performances of our kernel estimators. We first discuss the baseline methods we propose, and then, we present the results for both VTE and CVTE estimations.

\subsection{Baselines}
Since VTE or CVTE has not been discussed before, we made sensible adjustments to devise following baselines. We first describe the baseline for VTE, and adaptation to CVTE will be discussed subsequently.

\paragraph{Naive Estimator} We introduce a naive baseline that ignores the confounding $P(Y^{(a)}) = P(Y|A=a)$ and assumes potential outcomes are marginally uncorrelated $\mathrm{Cov}(Y^{(1)}, Y^{(0)}) = 0$. Under these simplifications, we can see that VTE has the simple form
\begin{align*}
    \mathrm{VTE}  &= \Var{Y|A=1} + \Var{Y|A=0}, 
\end{align*}
which can be estimated from data.

\paragraph{CATE-based Estimator} Many models exist to estimate the CATE $\tau(x)$ defined as 
\begin{align*}
    \tau(x) = \expect{Y|A=1, X=x} - \expect{Y|A=0, X=x}.
\end{align*}
We might be tempted to use the variance of the estimated CATE $\Var[X]{\tau(X)}$ as the estimation of VTE. This is, however, a biased estimator, as discussed in Remark~\ref{remark}. We compare to this estimator using causal forests \citep{wager2018estimation} to estimate the CATE.

\paragraph{Match-based Estimator} Causal matching \citep{Stuart2010Matching} is an approach to replicate a randomized experiment as closely as possible by matching treated and untreated units. Specifically, for each subject $i$ in the treated group, the unobserved potential outcome $Y^{(0)}$ is estimated by the average of observed outcomes of the $k$-nearest neighbor of the untreated subject. We repeat the same process to infer the potential outcome of the treated $Y^{(1)}$ for the untreated groups, and then compute the VTE $\Var{Y^{(1)} - Y^{(0)}}$ using all data. We attempted to match using Euclidean distance on the covariance matrix $X$ and propensity score matching (PSM) \citep{Rubin1996Matching}. 

\subsection{Experiment Results}

\paragraph{Synthetic setting for VTE estimation}
To test the performance of baselines, we employ the following data generation process, which is adopted from \citet{Singh2023Kernel}. The covariance $X \in \mathbb{R}^{100}$ is sampled from multivariate Gaussian $\mathcal{N}(0, \Sigma)$ where the covariance matrix $\Sigma$ is given as  
\begin{align*}
    \Sigma_{ij} = \begin{cases}
        1 & (i=j)\\
        0.5 & (|i-j| = 1)\\
        0 & (\text{otherwise})
    \end{cases}.
\end{align*}
Then, the probability of treatment $A=1$ is given as
\begin{align*}
    \prob{A=1} = \Phi(\vec{\beta}^\top X),
\end{align*}
where $\Phi$ is CDF of standard normal distribution and $\vec{\beta}$ is the vector whose $i$-th element is $\beta_i = 1/(i+1)^2$. Given treatment $X$ and outcome $A$, the outcome is generated as 
\begin{align*}
    Y = \vec{\beta}^\top X + AX_1 + \varepsilon, \quad \varepsilon \sim \mathcal{N}(0, 1)
\end{align*}
where $X_1$ denotes the first element of the covariate $X$. \cref{tab:vte_synthetic} summarizes the mean absolute error for 20 repetitions on various data sizes, and \cref{fig:vte_synthetic} presents the predictive distribution.
We used a Gaussian kernel for the proposed estimator, with the bandwidth set to the median of pairwise distances. The regularization parameters $\lambda_{f_a}, \lambda_{g_a}$ are set by minimizing the leave-one-out error of the kernel ridge regression.

\begin{table}[]
    \centering
    \begin{tabular}{c|ccc}
        &\multicolumn{3}{c}{Data size} \\
         &500 & 1000 &5000  \\\hline
         Causal Forest &2.34 (0.17) &2.31 (0.26) & 2.13 (0.10) \\
         Naive &3.77 (0.47) &3.79 (0.37)& 3.87 (0.14) \\
         Euc. Match. &0.82 (0.28) & 0.69	(0.29)& 0.50 (0.11) \\
         PSM &0.43 (0.40) & 0.41 (0.35)& 0.18 (0.11)\\\hline
         Proposed &0.35 (0.33) & 0.38 (0.24) & 0.15 (0.11)\\
    \end{tabular}
    \caption{Mean and standard error of absolute error for synthetic VTE estimation. }
    \label{tab:vte_synthetic}
\end{table}

\begin{figure}
    \centering
    \includegraphics[width=\linewidth]{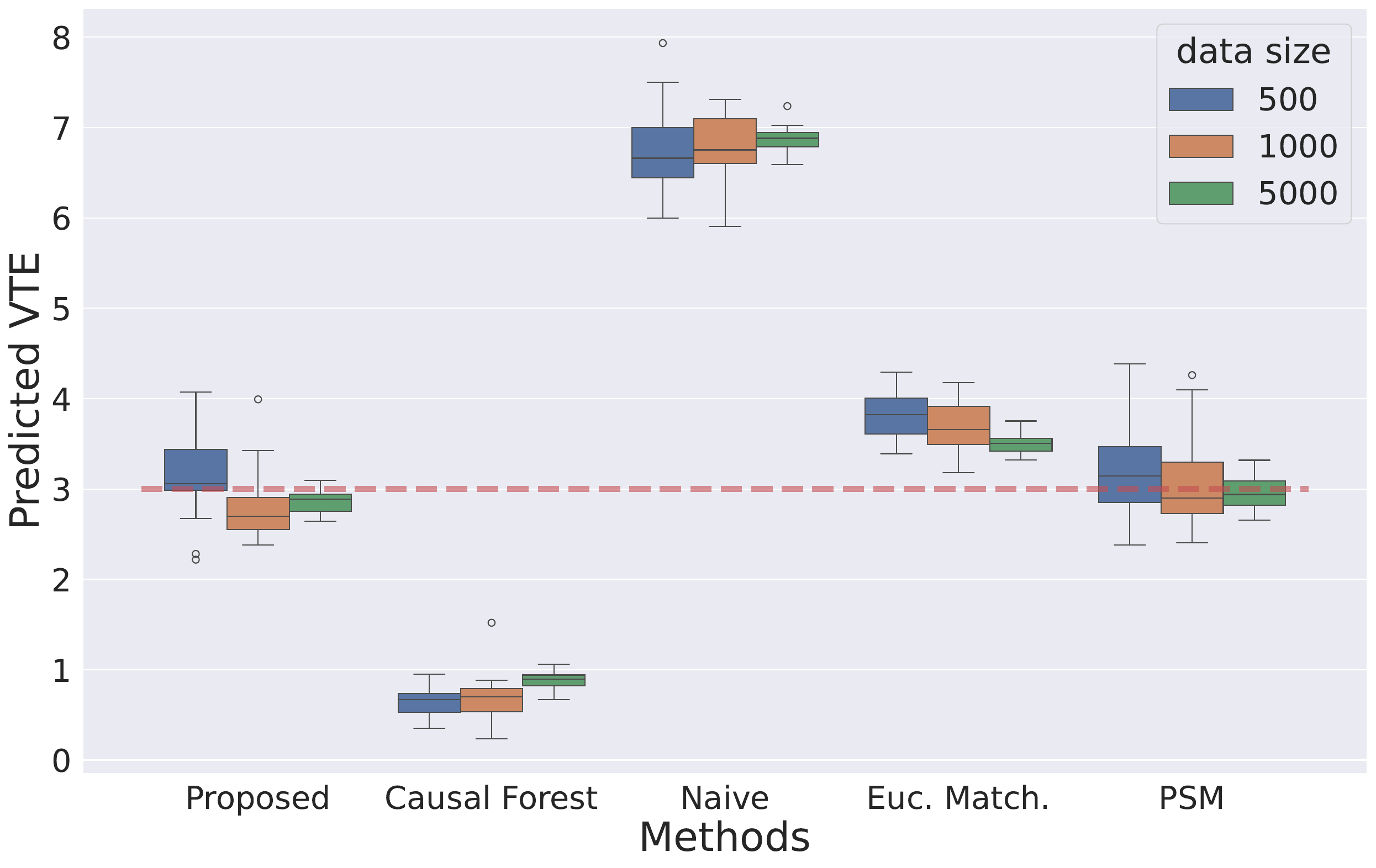}
    \caption{Box plot for VTE prediction. The red line shows the true VTE.}
    \label{fig:vte_synthetic}
\end{figure}

From \cref{fig:vte_synthetic}, we can see that Causal Forest and Naive baseline exhibit the large bias discussed in \cref{sec:prob-settings}. Although Euc. Match. also consistently overestimates the VTE, PSM seems to have little minimum bias, which is because the propensity score model is almost correctly satisfied. Still, the proposed model consistently outperforms all baselines in all data sizes, as shown in \cref{tab:vte_synthetic}.

\paragraph{Synthetic Setting for CVTE}

We use the same data generation process when testing CVTE, instead aiming to estimate the CVTE with $X_2 = 0$, where $X_2$ denotes the second element of the covariate $X$. Since $V$ is a subset of $X$, we used the alternative solution presented in \cref{sec:emp-cvte-identity}. To compute the conditioning, we only used the data subset of $|X_2|\leq 0.1$ in the matching and naive baseline. For the CATE-based Estimator, we used all data to train CATE $\tau(x)$, and computed the variance of $\tau(x)$ within the data subset of $|X_2|\leq 0.1$. We determined the remaining hyperparameters using the same procedure as for synthetic VTE, and the mean absolute error and the predictive error over 20 repeats for various data sizes are shown in \cref{tab:cvte_synthetic} and \cref{fig:cvte_synthetic}, respectively.

\begin{table}[]
    \centering
    \begin{tabular}{c|ccc}
        &\multicolumn{3}{c}{Data size} \\
         &500 & 1000 &5000  \\\hline
         Causal Forest &1.98 (0.15) & 1.94 (0.18) & 1.83 (0.08) \\
         Naive &2.21 (1.25)& 2.84 (0.75)& 2.95 (0.40) \\
         Euc. Match. &1.08 (1.09)& 1.25 (0.68)& 0.93 (0.33) \\
         PSM &1.63 (1.67)& 1.06 (0.73) & 0.48	(0.57)\\\hline
         Proposed &0.61 (0.33) & 0.40 (0.16) & 0.19 (0.12)
\\
    \end{tabular}
    \caption{Mean and standard error of absolute error for synthetic CVTE estimation. }
    \label{tab:cvte_synthetic}
\end{table}

\begin{figure}
    \centering
    \includegraphics[width=\linewidth]{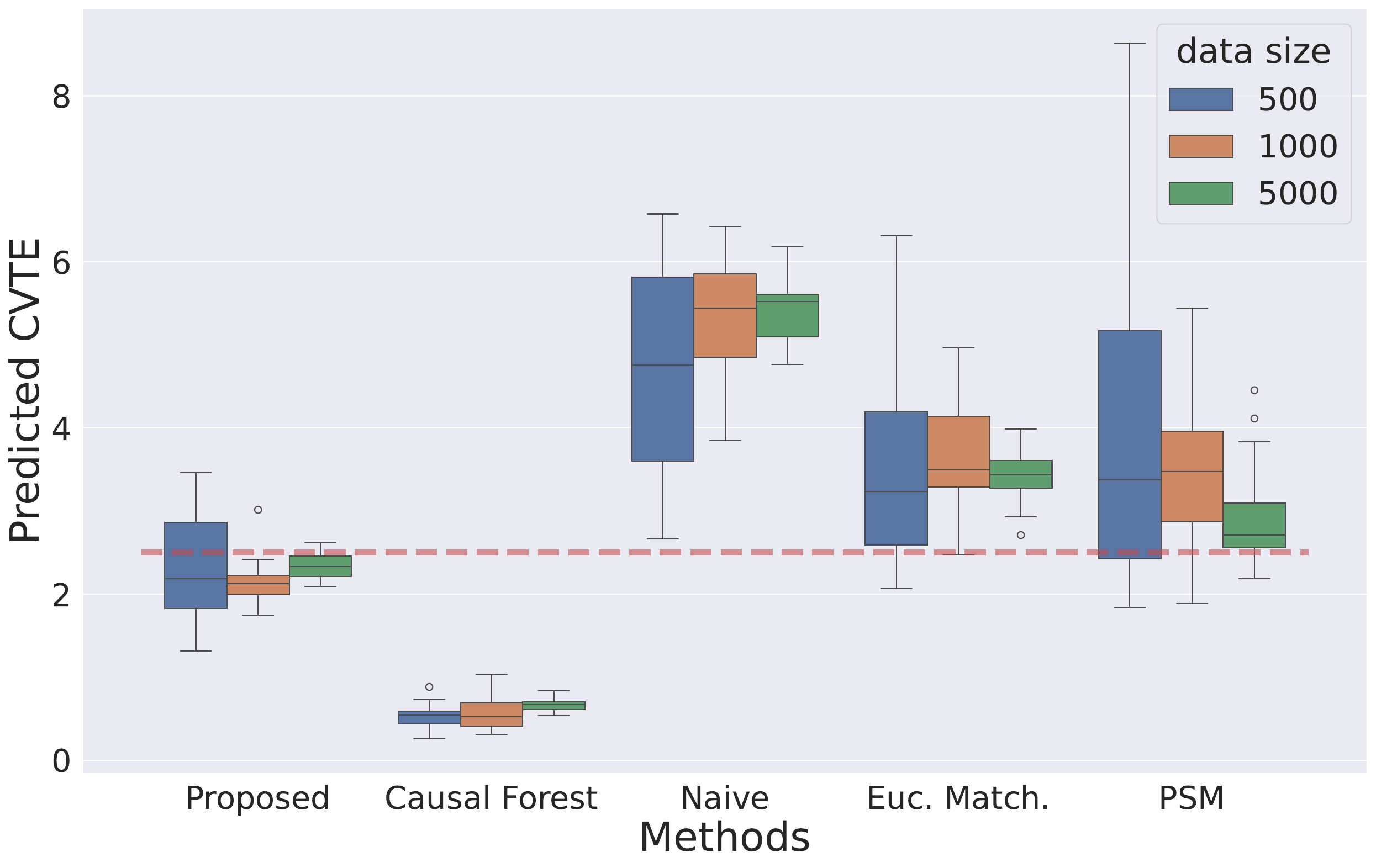}
    \caption{Box plot for CVTE prediction. The red line shows the true CVTE.}
    \label{fig:cvte_synthetic}
\end{figure}

Similar to the VTE setting, we see bias in the Causal Forest and Naive baselines and we observe the larger variance in the matching baselines. This is because we only used the small subset of data $|X_2| \leq 0.1$ to estimate the CVTE. Our proposed method, again, outperforms all baselines.

\paragraph{Semi-Simulated setting for VTE}

The IHDP dataset is widely used to evaluate the performance of the estimators for the ATE \citep{Shi2019,chernozhukov2022riesznet,Athey2019Causal}. This is a semi-synthetic data set based on the Infant Health and Development Program (IHDP) \citep{IHDP}, which examines the effects of home visits and specialized clinics on future developmental and health outcomes for premature infants. Following prior work, we generate outcomes and binary treatments from the 25-dimensional observable confounder, which comprises both continuous and categorical variables. Each dataset consists of 747 observations, and we normalized the outputs so that the observed outputs have unit variance. The performance in 1000 datasets is summarized in \cref{tab:vte_ihdp}. 

\begin{table}[]
    \centering
    \begin{tabular}{c|c}
        & \\\hline
         Causal Forest & 0.52 (0.11)\\
         Naive & 0.17 (0.08) \\
         Euc. Match & 0.06 (0.04) \\
         PSM	& 0.16 (0.09) \\ \hline
        Proposed & 0.10	(0.08)
    \end{tabular}
    \caption{Mean and standard error of absolute error for IHDP dataset. }
    \label{tab:vte_ihdp}
\end{table}

Although we see that proposed methods outperform Causal Forest and Naive baselines, which yield biased estimation, Euc. Match. achieves better performance than the proposed method (though this difference is not statistically significant). We suspect that this is because the covariate $X$ contains both continuous and categorical variables, a challenging setting for kernel ridge regression with Gaussian kernels. We could improve performance by tuning the appropriate kernel function or by using neural networks to model $f_a$ and $g_a$, though these modifications would not provide the same rigorous theoretical foundation.

\section{Conclusion}

We have proposed a novel causal quantity, the VTE, to measure the uncertainty in the causal effect. We have also extended VTE to CVTE, which measures uncertainty within subgroups sharing the same traits. We discussed the sufficient conditions for estimating these quantities from observable data, based on which we proposed kernel-based estimators. We established the consistency of the proposed method using kernel ridge regression and kernel conditional mean embedding arguments. Our empirical evaluation shows that the proposed method outperforms existing estimators, especially for CVTE estimation.

As future work, it would be promising to consider a \emph{doubly robust estimator} for VTE and CVTE. A current limitation of our consistency result is that the rate of convergence cannot exceed that of the kernel ridge regressor. A doubly robust estimator could achieve a better rate by introducing additional nuisance parameters to estimate, leveraging the \emph{mixed bias property}. Furthermore, it would be interesting to consider similar kernel estimators for higher moments of the causal effect. Prior work \citep{Kawakami2025Moments} proposed quantifying skewness and kurtosis, which depend on the third and fourth moments. It would be beneficial to develop estimators for these higher-order moments under confounding bias to provide a more complete picture of the causal effect distribution.

\section*{Impact Statement}

This paper presents work whose goal is to advance the field of machine learning. There are many potential societal consequences of our work, none of which we feel must be specifically highlighted here.

\section*{Acknowledgments}
Support for Bijan Mazaheri was provided by the Advanced Research Concepts (ARC) COMPASS program, sponsored by the Defense Advanced Research Projects Agency (DARPA) under agreement number HR001-25-3-0212.

\bibliography{reference}

@article{Cernozhukov2013Inference,
 author = {Victor Chernozhukov and Iván Fernández-Val and Blaise Melly},
 journal = {Econometrica},
 number = {6},
 pages = {2205--2268},
 title = {INFERENCE ON COUNTERFACTUAL DISTRIBUTIONS},
 volume = {81},
 year = {2013}
}

@ARTICLE{Song2013Kernel,
  author={Song, Le and Fukumizu, Kenji and Gretton, Arthur},
  journal={IEEE Signal Processing Magazine}, 
  title={Kernel Embeddings of Conditional Distributions: A Unified Kernel Framework for Nonparametric Inference in Graphical Models}, 
  year={2013},
  volume={30},
  number={4},
  pages={98-111},
}

@article{miao2018identifying,
  title={Identifying causal effects with proxy variables of an unmeasured confounder},
  author={Miao, Wang and Geng, Zhi and Tchetgen Tchetgen, Eric J},
  journal={Biometrika},
  volume={105},
  number={4},
  pages={987--993},
  year={2018},
  publisher={Oxford University Press}
}

@article{kunzel2019metalearners,
  title={Metalearners for estimating heterogeneous treatment effects using machine learning},
  author={K{\"u}nzel, S{\"o}ren R and Sekhon, Jasjeet S and Bickel, Peter J and Yu, Bin},
  journal={Proceedings of the national academy of sciences},
  volume={116},
  number={10},
  pages={4156--4165},
  year={2019},
  publisher={National Acad Sciences}
}

@article{xie2012estimating,
  title={Estimating heterogeneous treatment effects with observational data},
  author={Xie, Yu and Brand, Jennie E and Jann, Ben},
  journal={Sociological methodology},
  volume={42},
  number={1},
  pages={314--347},
  year={2012},
  publisher={Sage Publications Sage CA: Los Angeles, CA}
}

@InProceedings{mazaheri2025synthetic,
  title = 	 {Synthetic Potential Outcomes and Causal Mixture Identifiability},
  author =       {Mazaheri, Bijan and Squires, Chandler and Uhler, Caroline},
  booktitle = 	 {Proceedings of The 28th International Conference on Artificial Intelligence and Statistics},
  pages = 	 {4276--4284},
  year = 	 {2025},
  editor = 	 {Li, Yingzhen and Mandt, Stephan and Agrawal, Shipra and Khan, Emtiyaz},
  volume = 	 {258},
  series = 	 {Proceedings of Machine Learning Research},
  month = 	 {03--05 May},
  publisher =    {PMLR},
  url = 	 {https://proceedings.mlr.press/v258/mazaheri25a.html}
}

@article{de2020two,
  title={Two-way fixed effects estimators with heterogeneous treatment effects},
  author={De Chaisemartin, Cl{\'e}ment and d’Haultfoeuille, Xavier},
  journal={American economic review},
  volume={110},
  number={9},
  pages={2964--2996},
  year={2020},
  publisher={American Economic Association 2014 Broadway, Suite 305, Nashville, TN 37203}
}

@article{wager2018estimation,
  title={Estimation and inference of heterogeneous treatment effects using random forests},
  author={Wager, Stefan and Athey, Susan},
  journal={Journal of the American Statistical Association},
  volume={113},
  number={523},
  pages={1228--1242},
  year={2018},
  publisher={Taylor \& Francis}
}

@article{rosenbaum1983central,
  title={The central role of the propensity score in observational studies for causal effects},
  author={Rosenbaum, Paul R and Rubin, Donald B},
  journal={Biometrika},
  volume={70},
  number={1},
  pages={41--55},
  year={1983},
  publisher={Oxford University Press}
}

@book{peters2017elements,
  title={Elements of causal inference: foundations and learning algorithms},
  author={Peters, Jonas and Janzing, Dominik and Sch{\"o}lkopf, Bernhard},
  year={2017},
  publisher={The MIT Press}
}

@book{pearl2009causality,
  title={Causality},
  author={Pearl, Judea},
  year={2009},
  publisher={Cambridge university press}
}

@article{singh2019kernel,
  title={Kernel instrumental variable regression},
  author={Singh, Rahul and Sahani, Maneesh and Gretton, Arthur},
  journal={Advances in Neural Information Processing Systems},
  volume={32},
  year={2019}
}

@article{reifeis2022variance,
  title={On variance of the treatment effect in the treated when estimated by inverse probability weighting},
  author={Reifeis, Sarah A and Hudgens, Michael G},
  journal={American Journal of Epidemiology},
  volume={191},
  number={6},
  pages={1092--1097},
  year={2022},
  publisher={Oxford University Press}
}

@article{tran2023robust,
  title={Robust variance estimation and inference for causal effect estimation},
  author={Tran, Linh and Petersen, Maya and Schwab, Joshua and van der Laan, Mark J},
  journal={Journal of Causal Inference},
  volume={11},
  number={1},
  pages={20210067},
  year={2023},
  publisher={De Gruyter}
}

@article{matsouaka2023variance,
  title={Variance estimation for the average treatment effects on the treated and on the controls},
  author={Matsouaka, Roland A and Liu, Yi and Zhou, Yunji},
  journal={Statistical Methods in Medical Research},
  volume={32},
  number={2},
  pages={389--403},
  year={2023},
  publisher={SAGE Publications Sage UK: London, England}
}

@article{SriGreFukLanetal10,
  Author =	 {B. Sriperumbudur and A. Gretton and K. Fukumizu and
                  G. Lanckriet and B. Sch{\"o}lkopf},
  Journal =	 {Journal of Machine Learning Research},
  Pages =	 {1517-1561},
  Title =	 {Hilbert Space Embeddings and Metrics on Probability
                  Measures},
  Volume =	 11,
  Year =	 2010
}

@article{Rubin2005,
 author = {Donald B. Rubin},
 journal = {Journal of the American Statistical Association},
 number = {469},
 pages = {322--331},
 title = {Causal Inference Using Potential Outcomes: Design, Modeling, Decisions},
 volume = {100},
 year = {2005}
}

@article{Fischer2020Sobolev,
  author  = {Simon Fischer and Ingo Steinwart},
  title   = {Sobolev Norm Learning Rates for Regularized Least-Squares Algorithms},
  journal = {Journal of Machine Learning Research},
  year    = {2020},
  volume  = {21},
  number  = {205},
  pages   = {1--38},
}

@article{Vito2005,
author = {Vito, E. and Caponnetto, A. and Rosasco, L.},
title = {Model Selection for Regularized Least-Squares Algorithm in Learning Theory},
year = {2005},
volume = {5},
number = {1},
journal = {Foundation of Computational Mathematics},
}

@article{Smale2007LearningTE,
  title={Learning Theory Estimates via Integral Operators and Their Approximations},
  author={Stephen Smale and Ding-Xuan Zhou},
  journal={Constructive Approximation},
  year={2007},
  volume={26},
}

@inproceedings{Song2009Hilbert,
author = {Song, Le and Huang, Jonathan and Smola, Alex and Fukumizu, Kenji},
title = {Hilbert Space Embeddings of Conditional Distributions with Applications to Dynamical Systems},
year = {2009},
booktitle = {Proceedings of the 26th International Conference on Machine Learning},
pages = {961–968}
}

@inproceedings{Gre2012CME,
author = {Gr\"{u}new\"{a}lder, Steffen and Lever, Guy and Baldassarre, Luca and Patterson, Sam and Gretton, Arthur and Pontil, Massimilano},
title = {Conditional Mean Embeddings as Regressors},
year = {2012},
booktitle = {Proceedings of the 29th International Coference on International Conference on Machine Learning},
pages = {1803–1810}
}

@inproceedings{Park2020CME,
 author = {Park, Junhyung and Muandet, Krikamol},
 booktitle = {Advances in Neural Information Processing Systems},
 pages = {21247--21259},
 title = {A Measure-Theoretic Approach to Kernel Conditional Mean Embeddings},
 volume = {33},
 year = {2020}
}

@inproceedings{Li2022optimal,
title={Optimal Rates for Regularized Conditional Mean Embedding Learning},
author={Zhu Li and Dimitri Meunier and Mattes Mollenhauer and Arthur Gretton},
booktitle={Advances in Neural Information Processing Systems},
year={2022}
}

@article{Stuart2010Matching,
author = {Stuart, Elizabeth},
year = {2010},
pages = {1-21},
title = {Matching Methods for Causal Inference: A Review and a Look Forward},
volume = {25},
journal = {Statistical science : a review journal of the Institute of Mathematical Statistics},
}

@article{Rubin1996Matching,
 author = {Donald B. Rubin and Neal Thomas},
 journal = {Biometrics},
 number = {1},
 pages = {249--264},
 title = {Matching Using Estimated Propensity Scores: Relating Theory to Practice},
 volume = {52},
 year = {1996}
}

@inproceedings{Shi2019,
 author = {Shi, Claudia and Blei, David and Veitch, Victor},
 booktitle = {Advances in Neural Information Processing Systems},
 title = {Adapting Neural Networks for the Estimation of Treatment Effects},
 volume = {32},
 year = {2019}
}

@article{Athey2019Causal,
author = {Susan Athey and Julie Tibshirani and Stefan Wager},
title = {{Generalized random forests}},
volume = {47},
journal = {The Annals of Statistics},
number = {2},
pages = {1148 -- 1178},
year = {2019}
}

@inproceedings{chernozhukov2022riesznet,
author = {Chernozhukov, Victor and Newey, Whitney K. and Quintas-Martinez, Victor and Syrgkanis, Vasilis},
title = {RieszNet and ForestRiesz: Automatic Debiased Machine Learning with Neural Nets and Random Forests},
booktitle = {Proceedings of the  The 39th International Conference on Machine Learning},
year = {2022},
}

@MISC{IHDP,
  title     = "Infant Health and Development Program ({IHDP)}: Enhancing the
               Outcomes of Low Birth Weight, Premature Infants in the United
               States, 1985-1988",
  author    = "Gross, R. T.",
  year      =  1993
}

@inproceedings{Kawakami2025Moments,
author = {Kawakami, Yuta and Tian, Jin},
title = {Moments of causal effects},
year = {2025},
publisher = {JMLR.org},
booktitle = {Proceedings of the Forty-First Conference on Uncertainty in Artificial Intelligence},
articleno = {89},
numpages = {33},
location = {Rio de Janeiro, Brazil},
series = {UAI '25}
}

@article{Kennedy2023Semi,
    author = {Kennedy, E H and Balakrishnan, S and Wasserman, L A},
    title = {Semiparametric counterfactual density estimation},
    journal = {Biometrika},
    volume = {110},
    number = {4},
    pages = {875-896},
    year = {2023},
    month = {03},
    issn = {1464-3510},
}

@article{Krikamol2021Counterfactual,
author = {Muandet, Krikamol and Kanagawa, Motonobu and Saengkyongam, Sorawit and Marukatat, Sanparith},
title = {Counterfactual mean embeddings},
year = {2021},
publisher = {JMLR.org},
volume = {22},
number = {1},
journal = {Journal of Machine Learning Research},
articleno = {162},
numpages = {71},
}

@article{Singh2023Kernel,
    author = {Singh, R and Xu, L and Gretton, A},
    title = {Kernel methods for causal functions: dose, heterogeneous and incremental response curves},
    journal = {Biometrika},
    volume = {111},
    number = {2},
    pages = {497-516},
    year = {2023},
    month = {07},
}
\bibliographystyle{abbrvnat}

\newpage
\appendix
\onecolumn

\section{Empirical solution for CVTE conditioned on covariates} \label{sec:emp-cvte-identity}

In this section, we discuss the CVTE solution where $X$ contains the conditioning $V$. In such a case, we cannot use the empirical solution presented in \cref{thm:cvte-emp-sol}, since the identity operator is not the Hilbert-Schmidt operator when the RKHS is of infinite dimension, which contradicts the assumptions in \cref{prop:cme}. Instead, we use kernel tensor feature similar to kernel CATE estimator proposed in \citep{singh2019kernel}.

Given data $\{x_i, y_i, a_i\}$, where $x_i = (\tilde x_i, v_i)$, define composite kernel function $k$ as 
\begin{align*}
    k(x_i, x_j) = k_{\tilde X}(\tilde x_i, \tilde x_j) k_V(v_i, v_j),
\end{align*}
where $k_{\tilde X}, k_V$ are arbitrary kernel functions. The solution of CVTE can be given as 
\begin{align*}
    \widehat{\mathrm{CVTE}}(v) &= \sum_{i=1}^n w_i(v)\left(\hat{g}_1(\tilde x_i, v) +  \hat{g}_0(\tilde x_i, v) - 2 \hat{f}_1(\tilde x_i, v)\hat{f}_0(\tilde x_i, v)\right) - \left(\sum_{i=1}^n  w_i(v) \left(\hat{f}_1(\tilde x_i, v) - \hat{f}_0(\tilde x_i, v)\right)\right)^2,
\end{align*}
where $f_a, g_a$ are the kernel ridge regressors defined in \eqref{eq:f-krr}, \eqref{eq:g-krr} using the composite kernel $k$ and weight $w_i(v)$ defined in \cref{thm:cvte-emp-sol}. This idea uses the fact that for any function $h(x)$
\begin{align*}
    \expect{h(X)|V=v} = \expect[\tilde X]{h(\tilde X, v)\middle|V=v},
\end{align*}
where right hand side can be computed by the conditional mean embedding.

\section{Theoretical Results}

In this section, we provide deferred proofs from the main paper.

\subsection{Proof of \cref{thm:nonidentifiability}} \label{sec:proof-non-identifiability}

Let $X\sim \mathcal{N}(0,1)$ and $A$ is sampled from Bernoulli distribution with $\prob{A=1}=0.5$. Consider two distributions for potential outcomes.

\begin{align*}
    \text{Case 1:}& \quad Y^{(0)} \sim \mathcal{N}(0, 1), \quad Y^{(1)} = Y^{(0)} \\ \text{Case 2:}& \quad Y^{(0)} \sim \mathcal{N}(0, 1), \quad Y^{(1)} = -Y^{(0)}
\end{align*}

Since $Y^{(0)}, Y^{(1)} \indepe A$, we can see that $X$ satisfies the ignorability \cref{assum:ignorability}. Although these two has different VTEs, the observational distribution $P(Y|A, X)$ is given as $\mathcal{N}(0,1)$. This shows the  existence of the data generation processes stated in the theorem.

\subsection{Proof of \cref{thm:vte-identification,thm:cvte-identification}} \label{sec:proof-identifiability}

From the definition of VTE, we have
\begin{align*}
    \mathrm{VTE} &= \Var{Y^{(1)} - Y^{(0)}}\\
    &= \expect{(Y^{(1)} - Y^{(0)})^2} -\expect{(Y^{(1)} - Y^{(0)})}^2\\
    &= \expect{(Y^{(1)})^2} -2\expect{Y^{(1)}Y^{(0)}} +  \expect{(Y^{(0)})^2} -\left(\expect{Y^{(1)}} - \expect{Y^{(0)}}\right)^2
\end{align*}
For $a \in \{0,1\}$ and $n \in \{1,2\}$, we have 
\begin{align*}
    \expect{(Y^{(a)})^n} &= \expect[X]{\expect{(Y^{(a)})^n \middle| X}}\\
    &= \expect[X]{\expect{(Y^{(a)})^n \middle| A=a, X}} \quad \because \text{\cref{assum:ignorability}}\\
    &= \expect[X]{\expect{Y^n \middle| A=a, X}} \quad \because \text{No inference}\\
    &=\begin{cases}
        \expect[X]{f_a(X)} & (n=1)\\
        \expect[X]{g_a(X)} & (n=2)
    \end{cases}
\end{align*}
Furthermore, 
\begin{align*}
    \expect{Y^{(1)}Y^{(0)}} &= \expect[X]{\expect{Y^{(1)}Y^{(0)} \middle| X}}\\
    &= \expect[X]{\expect{Y^{(1)}\middle|X}\expect{Y^{(0)} \middle| X}} \quad\because\text{\cref{assum:discomposional}}\\
    &=  \expect[X]{f_0(X)f_1(X)}
\end{align*}
Rearranging these results gives what we wanted. We can show the identification results for CVTE using a similar discussion as well.  If we would like to relax \cref{assum:discomposional}, then we observe that the second step in the final block, which applies \cref{assum:discomposional}, may also add a correlation term between $Y^{(1)}$ and $Y^{(0)}$. Since this term may not be observable, its value must be assumed, or the results would need to be relaxed into upper and lower bounds.

\subsection{Proofs in VTE estimation} \label{sec:proof-vte-estimation}

We first state the detailed convergence results for kernel ridge regression and kernel mean embeddings.

\begin{prop}[{Corollary of \citealp[Theorem 1]{Fischer2020Sobolev}}]  \label{prop:detail-krr}
    Assume that kernel is bounded $k(x,x) \leq \kappa^2$ and $f_a, g_a$ are in the $\beta$-power space $\calH^\beta_\calX$ with $\beta \in (1,2)$. We assume the eigen-value decay satisfies a polynomial upper bound of order $1/p$ for $p > 0$. If $|Y - f_a(X)|$ and $|Y^2 - g_a(X)|$ are upper-bounded by a constant $M$ almost surely, we have
    \begin{align*}
        \|\hat{f}_a - f_a\|_{\calH_\calX}  \leq \log(16/\delta) \mathfrak{C}_{f_a} n_a^{-\frac{\beta}{2(\beta + p)}}, \quad \|\hat{g}_a - g_a\|_{\calH_\calX} \leq \log(16/\delta) \mathfrak{C}_{g_a} n_a^{-\frac{\beta}{2(\beta + p)}}
    \end{align*}
    for all $a \in \{0,1\}$ with the probabilty of $1-\delta$ by setting $\lambda_{f_a},\lambda_{g_a} = O(n^{-1/(\beta+p)})$, where $\mathfrak{C}_{f_a}, \mathfrak{C}_{g_a}$ denotes the constants does not depends on $n_a, \delta$.
\end{prop}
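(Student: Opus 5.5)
The plan is to read the statement off \citet[Theorem 1]{Fischer2020Sobolev}, applied once to each of the four regression problems $f_0,f_1,g_0,g_1$, and then combine the four results with a union bound. For each problem I check the hypotheses of that theorem.

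\textbf{Hypotheses.}
\begin{itemize}
\item Boundedness: $k(x,x)\le\kappa^2$ gives a bounded measurable kernel. This yields an embedding property $(\mathrm{EMB})$ with some index $\alpha\le 1$, so $\alpha=1$ is always admissible.
\item Eigenvalue decay: this is exactly the assumed polynomial bound $(\mathrm{EVD})$ with exponent $1/p$.
\item Source condition: this is $f_a,g_a\in\calH^\beta_\calX$ with $\beta\in(1,2)$. Because $\beta>1\ge\alpha$, we are in the ``well-specified'' regime of the theorem. There the rate needs no case distinction on $\alpha$, and the regularization choice is simply $\lambda\asymp n_a^{-1/(\beta+p)}$.
\item Noise condition $(\mathrm{MOM})$: the almost-sure bounds $|Y-f_a(X)|\le M$ and $|Y^2-g_a(X)|\le M$ imply the Bernstein-type moment condition with $\sigma=L=M$.
\end{itemize}

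\textbf{Where the rate comes from.} The exponent is read off from the two terms in the theorem's error decomposition, evaluated at $\lambda=n_a^{-1/(\beta+p)}$. The bias term is of order $\lambda^{\beta/2}$ in the scale of the power-space norms. The sample-error term is of order $\tau\sqrt{\mathcal N(\lambda)/n_a}$, where the effective dimension satisfies $\mathcal N(\lambda)\lesssim\lambda^{-p}$. At this choice of $\lambda$ both terms equal $n_a^{-\beta/(2(\beta+p))}$, which is the exponent in the statement.

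\textbf{Probability bookkeeping.} The theorem states the squared-norm bound $\|\hat f-f\|^2\le\tau^2 K n^{-2r}$, holding with probability at least $1-4e^{-\tau}$. I take square roots to obtain a bound linear in $\tau$. I then allocate failure probability $\delta/4$ to each of the four regressions, which forces $4e^{-\tau}=\delta/4$, that is, $\tau=\log(16/\delta)$. A union bound then gives all four inequalities simultaneously with probability at least $1-\delta$, each carrying the factor $\log(16/\delta)$. The constants $\mathfrak C_{f_a},\mathfrak C_{g_a}$ are $\sqrt K$. They depend on $\kappa$, $M$, $p$, $\beta$, the source norms $\|f_a\|_\beta$ and $\|g_a\|_\beta$, and the constant in $\lambda$, but not on $n_a$ or $\delta$. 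The condition $\tau\ge 1$, together with the theorem's lower threshold on $n_a$, can be absorbed into these constants by enlarging them.

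\textbf{Main obstacle.} The delicate step is matching the norm index in Fischer--Steinwart's $[\calH]^\gamma$-norm result to the norm $\|\cdot\|_{\calH_\calX}$ used in the statement. The theorem's exponent is $(\beta-\gamma)/(2(\beta+p))$. To get the stated exponent $\beta/(2(\beta+p))$, the convention must be that the $\beta$-power space is measured relative to the norm in which the error is reported, so that the effective $\gamma$ is $0$ in that parametrization. I would fix this convention explicitly and then verify the admissibility condition $\gamma\le 1$ and $\gamma<\beta$. If the statement is instead read with $\gamma=1$, the rate degrades to $n_a^{-(\beta-1)/(2(\beta+p))}$. So this bookkeeping of indices is the point I would check most carefully.
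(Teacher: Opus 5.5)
Your route is the one the paper intends. The paper gives no argument beyond citing Fischer--Steinwart Theorem 1, and the factor $\log(16/\delta)$ is exactly your choice $4e^{-\tau}=\delta/4$ followed by a union bound over $f_0,f_1,g_0,g_1$. Your checks of (EMB) with $\alpha=1$, (MOM) with $\sigma=L=M$, and the case $\beta+p>\alpha$ are fine.

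\textbf{The gap.} The gap is the step you flagged yourself, and no choice of convention closes it. With $\beta\in(1,2)$ the power scale is Fischer--Steinwart's own. Indeed $\beta>1$ is what gives $f_a\in\calH_\calX$, and $\beta\le 2$ is the saturation range of KRR. The norm $\|\cdot\|_{\calH_\calX}$ is the $[\calH_\calX]^1_\nu$-norm, so $\gamma=1$ is forced.

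Your ``where the rate comes from'' paragraph is the $\gamma=0$ computation. In $[\calH_\calX]^\gamma_\nu$ the bias is $\lambda^{(\beta-\gamma)/2}$ and the sample error is of order $\tau\sqrt{\mathcal N(\lambda)/(n_a\lambda^{\gamma})}$. For $\gamma=1$ these balance at $n_a^{-(\beta-1)/(2(\beta+p))}$.

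Re-indexing so that $\calH_\calX$ sits at level $0$ does not rescue the exponent. It places $f_a$ at Fischer--Steinwart level $1+\beta>2$, which is beyond saturation. It also leaves the denominator different from $\beta+p$, because $\mathcal N(\lambda)$ is governed by the $L_2(\nu)$ spectrum.

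In fact the displayed RKHS-norm bound is false in general. Suppose $\mu_i\asymp i^{-1/p}$ and the noise has conditional variance $\sigma^2>0$, and take $\lambda\asymp n_a^{-1/(\beta+p)}$. The leading noise term $(C_\nu+\lambda)^{-1}\frac{1}{n_a}\sum_i\varepsilon_i\phi(x_i)$ of $\hat f_a-f_a$ then has expected squared $\calH_\calX$-norm of order $\frac{\sigma^2}{n_a}\sum_i\frac{\mu_i}{(\mu_i+\lambda)^2}\asymp\frac{\lambda^{-1-p}}{n_a}\asymp n_a^{-(\beta-1)/(\beta+p)}$. This is consistent with Fischer--Steinwart's minimax lower bound for $\gamma=1$.

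What Theorem 1 actually gives is one of two things:
\begin{itemize}
\item the stated exponent $\beta/(2(\beta+p))$, but in $L_2(P_{X\mid A=a})$;
\item the exponent $(\beta-1)/(2(\beta+p))$ in $\|\cdot\|_{\calH_\calX}$.
\end{itemize}
The later proofs use $|\hat g_a(x_i)-g_a(x_i)|\le\kappa\|\hat g_a-g_a\|_{\calH_\calX}$, so the second version is the one you should prove (or an $L_\infty$ bound via $\gamma=\alpha$). The rates claimed downstream change accordingly. The paper's statement carries the same error, so the citation does not support it as written.

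\textbf{Two smaller points.} First, the ``sufficiently large $n$'' in Fischer--Steinwart comes from a condition of the form $n\ge c\,\tau\, g_\lambda\lambda^{-\alpha}$, so the threshold grows with $\tau=\log(16/\delta)$. Enlarging $\delta$-free constants does not by itself absorb it. Either state the bound for $n_a\ge n_0(\delta)$ or give a separate argument for smaller $n_a$.

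Second, the theorem is applied conditionally on $(a_1,\dots,a_n)$ to a subsample that is i.i.d.\ from $P_{X,Y\mid A=a}$. Hence (EVD) and (SRC) must be imposed with respect to $\nu_a=P_{X\mid A=a}$. The regularization must also satisfy $\lambda\asymp n_a^{-1/(\beta+p)}$ two-sidedly, not merely $O(n^{-1/(\beta+p)})$.
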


We also use the following lemma for applying Hoeffding's inequality.
\begin{prop}[{Corollary of \citealp[Lemma 14]{Fischer2020Sobolev}}] 
    We have 
    \begin{align*}
        \|\hat{f}_a\|_{\calH_\calX} \leq \|f_a\|_{\calH_\calX}, \quad \|\hat{g}_a\|_{\calH_\calX} \leq \|g_a\|_{\calH_\calX}
    \end{align*}
\end{prop}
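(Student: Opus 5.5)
The plan is to derive both inequalities from one operator-theoretic fact, the one underlying \citet[Lemma 14]{Fischer2020Sobolev}: a Tikhonov-regularized solution is a \emph{spectral contraction} of the target function. The argument is identical for $\hat f_a$ (targets $y$) and $\hat g_a$ (targets $y^2$), so I treat only $\hat f_a$.

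\textbf{Step 1: operator form.} Let $\hat S_a:\calH_\calX\to\mathbb{R}^{n_a}$ be the sampling operator $h\mapsto (h(x^{(a)}_i))_i/\sqrt{n_a}$, and let $\hat C_a=\hat S_a^*\hat S_a=\frac1{n_a}\sum_i\phi(x^{(a)}_i)\otimes\phi(x^{(a)}_i)$. By the representer theorem, \eqref{eq:f-krr} can be written as
\[
\hat f_a=(\hat C_a+\lambda_{f_a}I)^{-1}\hat S_a^*\vec y_a/\sqrt{n_a}.
\]
Replacing $\hat C_a$ by the population covariance $C_a=\expect{\phi(X)\otimes\phi(X)\mid A=a}$ gives the regularized population solution
\[
f_{a,\lambda}=(C_a+\lambda I)^{-1}C_a f_a.
\]
This form uses $\expect{Y\mid A=a,X}=f_a(X)$ and $f_a\in\calH_\calX$.

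\textbf{Step 2: contraction.} The operator $C_a$ is self-adjoint, positive and trace class. Its spectral decomposition $C_a=\sum_j\mu_j e_j\otimes e_j$ gives $(C_a+\lambda I)^{-1}C_a=\sum_j\frac{\mu_j}{\mu_j+\lambda}e_j\otimes e_j$, with every coefficient in $[0,1)$. Hence
\[
\|f_{a,\lambda}\|^2_{\calH_\calX}=\sum_j\Bigl(\frac{\mu_j}{\mu_j+\lambda}\Bigr)^2\braket{f_a,e_j}^2\le\|f_a\|^2_{\calH_\calX}.
\]
The same calculation with $\hat C_a$ in place of $C_a$ shows that $(\hat C_a+\lambda I)^{-1}\hat C_a$ also has operator norm at most $1$. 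This is exactly the content of Lemma~14 that the corollary cites.

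\textbf{Step 3: transfer to $\hat f_a$ (main obstacle).} Write $\vec y_a=\sqrt{n_a}\hat S_a f_a+\vec\varepsilon_a$, where $\varepsilon_i=y^{(a)}_i-f_a(x^{(a)}_i)$. This splits the estimator as
\[
\hat f_a=(\hat C_a+\lambda I)^{-1}\hat C_a f_a+(\hat C_a+\lambda I)^{-1}\hat S_a^*\vec\varepsilon_a/\sqrt{n_a}.
\]
By Step 2, the first term has norm at most $\|f_a\|_{\calH_\calX}$. The obstacle is the noise term. With nonzero noise it is not identically zero, so the inequality can hold exactly only for the noiseless-target (equivalently, population-regularized) solution. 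This is the object to which the cited Lemma~14 literally applies.

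I would therefore prove the statement in that sense, and then check that this is all that the later use (Hoeffding's inequality in \cref{thm:vte-emp-sol}) requires. That use needs only an almost-sure bound $\sup_x|\hat f_a(x)|\le\kappa\|\hat f_a\|_{\calH_\calX}$. Such a bound holds with $\|f_a\|_{\calH_\calX}$ up to the vanishing correction already controlled by \cref{prop:detail-krr}, or alternatively with the cruder deterministic bound $\|\hat f_a\|_{\calH_\calX}\le\kappa R/\lambda_{f_a}$ (respectively $\kappa R^2/\lambda_{g_a}$), which follows because $|y|\le R$.
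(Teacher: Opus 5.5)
You are right that the statement cannot be proved as written, and your Steps 1--2 already contain everything the citation can deliver. The paper offers no argument beyond attributing the claim to Fischer--Steinwart's Lemma~14. Like your spectral calculation, that lemma concerns the infinite-sample regularized solution $f_{a,\lambda}=(C_a+\lambda I)^{-1}C_af_a$, never the data-dependent $\hat f_a$. (The empirical noiseless analogue $(\hat C_a+\lambda I)^{-1}\hat C_af_a$ also contracts, but it is a different object, not ``equivalent''.) For $\hat f_a$ itself the inequality is false. You should establish this with a counterexample rather than by noting that the noise term is nonzero, because a nonzero extra term does not by itself rule out the bound. Take any model with $f_a\equiv 0$ in which $Y$ is not almost surely zero, and a strictly positive definite kernel (e.g.\ Gaussian) at distinct design points. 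With positive probability $\vec y_a\neq 0$, hence $\alpha=(K_a+n_a\lambda_{f_a}I)^{-1}\vec y_a\neq 0$, and so $\|\hat f_a\|^2_{\calH_\calX}=\alpha^\top K_a\alpha>0=\|f_a\|^2_{\calH_\calX}$.

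The genuine error is in your repair for the downstream use. The deterministic bound $\|\hat f_a\|_{\calH_\calX}\le\kappa R/\lambda_{f_a}$ is true but does not serve. In the proof of \cref{thm:vte-emp-sol}, $|\hat f_0(x_i)|$ multiplies the KRR error $|\hat f_1(x_i)-f_1(x_i)|$. With $\lambda_{f_a}\asymp n^{-1/(\beta+p)}$ this product is of order $n^{1/(\beta+p)}\cdot n^{-\beta/(2(\beta+p))}=n^{(2-\beta)/(2(\beta+p))}$, which diverges for $\beta<2$. The sharper bound $R/\sqrt{\lambda_{f_a}}$, obtained by comparing the ridge objective with $h=0$, restores consistency but loses the stated rate. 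Your first option is the correct one. On the event of \cref{prop:detail-krr}, $\|\hat f_a\|_{\calH_\calX}\le\|f_a\|_{\calH_\calX}+\|\hat f_a-f_a\|_{\calH_\calX}$. This is a high-probability rather than almost-sure bound, which suffices because the argument already runs on that event.

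Note also that the proposition enters the perturbation estimate for $|\widetilde{\mathrm{VTE}}-\widehat{\mathrm{VTE}}|$, not Hoeffding's inequality, which the paper applies to the true $f_a,g_a$. It can be bypassed altogether via
\[
\hat f_0\hat f_1-f_0f_1=(\hat f_0-f_0)(\hat f_1-f_1)+f_0(\hat f_1-f_1)+f_1(\hat f_0-f_0)
\]
and $\hat m^2-m^2=(\hat m-m)^2+2m(\hat m-m)$ for $m=\frac1n\sum_i(f_1-f_0)(x_i)$. These need only $\|f_a\|_{\calH_\calX}$ and the error norms, and the product terms are of second order. The CVTE proof, which uses $\|\hat f_a\|_{\calH_\calX}$ and $\|\hat g_a\|_{\calH_\calX}$, is repaired the same way.
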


Now, we prove \cref{thm:vte-emp-sol}. Under the condition of \cref{prop:detail-krr}, for all $i \in [n]$, we have 
\begin{align*}
    \left|\hat{g}_a(x_i) - g_a(x_i)\right| &\leq \kappa \|\hat{g}_a - g_a\|_{\calH_\calX}   \leq \kappa \log(16/\delta) \mathfrak{C} n_a^{-\frac{\beta}{2(\beta + p)}}\\
    \left|\hat{f}_0(x_i)\hat{f}_1(x_i) - f_0(x_i)f_1(x_i)  \right| &\leq \left|\hat{f}_0(x_i)\hat{f}_1(x_i) - \hat{f}_0(x_i)f_1(x_i)\right| + \left|\hat{f}_0(x_i)f_1(x_i) - f_0(x_i)f_1(x_i)  \right|\\
    &= |\hat{f}_0(x_i)||\hat{f}_1(x_i) - f_1(x_i)| + |f_1(x_i)||\hat{f}_0(x_i) - f_0(x_i)|\\
    &\leq \kappa(\|f_1\|_{\calH_\calX} + \|f_0\|_{\calH_\calX}) \log(16/\delta) \mathfrak{C} (n_0^{-\frac{\beta}{2(\beta + p)}}+n_1^{-\frac{\beta}{2(\beta + p)}})\\
    \left|\hat{f}_a(x_i) - f_a(x_i)\right| &\leq \kappa \|\hat{f}_a - f_a\|_{\calH_\calX}   \leq \kappa \log(16/\delta) \mathfrak{C} n_a^{-\frac{\beta}{2(\beta + p)}}
\end{align*}
for 
\begin{align*}
    \mathfrak{C} = \max(\mathfrak{C}_{f_1}, \mathfrak{C}_{f_0}, \mathfrak{C}_{g_1}, \mathfrak{C}_{g_0}).
\end{align*}
Therefore, for $\widetilde{\mathrm{VTE}}$ defined
\begin{align*}
       \widetilde{\mathrm{VTE}} &= \frac1n \sum_{i=1}^n g_1(x_i) +  g_0(x_i) - 2 f_1(x_i)f_0(x_i) - \left(\frac1n \sum_{i=1}^n f_1(x_i) - f_0(x_i)\right)^2,
   \end{align*}
we have,
\begin{align*}
    &|\widetilde{\mathrm{VTE}} - \widehat{\mathrm{VTE}}| \\
    &\leq \kappa(1 + 2 \|f_1\|_{\calH_\calX} + 2\|f_0\|_{\calH_\calX})\log(16/\delta) \mathfrak{C}(n_0^{-\frac{\beta}{2(\beta + p)}} + n_1^{-\frac{\beta}{2(\beta + p)}}) \\
    &\quad + \left|\left(\frac1n \sum_{i=1}^n (f_1(x_i)-\hat{f}_1(x_i)) - (f_0(x_i)-\hat{f}_0(x_i))\right)\left(\frac1n \sum_{i=1}^n (f_1(x_i)+\hat{f}_1(x_i)) - (f_0(x_i)+\hat{f}_0(x_i))\right)\right|\\
    &\leq \kappa(1 + 2 \|f_1\|_{\calH_\calX} + 2\|f_0\|_{\calH_\calX})\log(16/\delta) \mathfrak{C} n_a^{-\frac{\beta}{2(\beta + p)}} +  (2 \|f_1\|_{\calH_\calX} + 2\|f_0\|_{\calH_\calX}) (2\kappa \log(16/\delta) \mathfrak{C} (n_0^{-\frac{\beta}{2(\beta + p)}} + n_1^{-\frac{\beta}{2(\beta + p)}})),
\end{align*}
with probability $1-4\delta$. From Hoeffding's inequality we have 
\begin{align*}
    &\left|\sum_{i=1}^n g_1(x_i) +  g_0(x_i) - 2 f_1(x_i)f_0(x_i) - \expect{ g_1(X) +  g_0(X) - 2 f_1(X)f_0(X)} \right| \leq O_p(1/n^{-1/2}),\\
    &\left|\frac1n \sum_{i=1}^n f_1(x_i) - f_0(x_i) - \expect{f_1(X) - f_0(X)}\right| \leq O_p(1/n^{-1/2}).
\end{align*}
Therefore, we have
\begin{align*}
    |\mathrm{VTE} - \widehat{\mathrm{VTE}}| &\leq |\mathrm{VTE} - \widetilde{\mathrm{VTE}}| + |\widetilde{\mathrm{VTE}} - \widehat{\mathrm{VTE}}|\\
    &\leq O_p(n_0^{-\frac{\beta}{2(\beta + p)}} + n_1^{-\frac{\beta}{2(\beta + p)}}).
\end{align*}
From this, we can see that the convergence rate is equal to the kernel ridge regression rate in \cref{prop:detail-krr}.

\subsection{Proofs in CVTE Estimation}

We first prove \cref{lem:cme-to-cvte}.
\begin{proof}
    From \cref{thm:cvte-identification}, under \cref{assum:discomposional,assum:ignorability},
    \begin{align*}
        \mathrm{CVTE}(v) = \expect[X]{g_1(X) + g_0(X) -2f_1(X)f_0(X)|V=v}  - \expect[X]{f_1(X) - f_0(X)|V=v}^2.
    \end{align*}
    Now if $f_a, g_a \in \mathcal{H}_\calX$, from linearlity, we have
    \begin{align*}
        \expect[X]{g_1(X) + g_0(X)|V=v} &= \expect[X]{\braket[\calH_\calX]{g_1 + g_0, \phi(X)}|V=v}\\
        &= \braket[\calH_\calX]{g_1 + g_0, \expect[X]{\phi(X)}|V=v}\\
        &= \braket[\calH_\calX]{g_1 + g_0, \mu_{X|V}(v)}\\
        \expect[X]{f_1(X)f_0(X)|V=v} &= \expect[X]{\braket[\calH_\calX]{f_1, \phi(X) \otimes \phi(X) f_0}|V=v}\\
        &= \braket[\calH_\calX]{f_1, \expect[X]{\phi(X) \otimes \phi(X)|V=v} f_0}\\
        &= \braket[\calH_\calX]{f_1, C_{XX|V}(v) f_0}\\
        \expect[X]{f_1(X) - f_0(X)|V=v} &= \expect[X]{\braket[\calH_\calX]{f_1 - f_0, \phi(X)}|V=v}\\
        &= \braket[\calH_\calX]{f_1 - f_0, \expect[X]{\phi(X)}|V=v}\\
        &= \braket[\calH_\calX]{f_1 - f_0, \mu_{X|V}(v)}
    \end{align*}
    Combine them yields what we wanted.
\end{proof}

Next, we state the convergence result of conditional mean embedding $\mu_{X|V}$.

\begin{prop}[{Corollary of \citealp[Thoerem 1]{singh2019kernel}}] 
    Assume $k(x,x) \leq \kappa^2$ and $k_V(v,v) \leq Q^2$ and the operator operator $H_\rho: f\in\calH_\calX \to \expect{f(X)|V=\cdot}\in\calH_\calV$ is  Hilbert-Schmidt. Furthermore, we assume there exists Hilbert-Schmidt operator $G$ and constant $c \in (1,2]$ such that 
    \begin{align*}
        H_\rho = \expect{\psi(V) \otimes \psi(V)}^{\frac{c-1}{2}} G,
    \end{align*}
    where $\psi(v) = k_V(v,\cdot)$. Then, with probability $1-\delta$, we have
    \begin{align*}
        \left\| \mu_{X|V}(v) -  \sum_{i=1}^n w_i(v) \phi(x_i)\right\|_{\mathcal{H}_\calX} \leq \mathfrak{D} (n\log(1/\delta))^{-\frac{c-1}{2(c+1)}}
    \end{align*}
    for $\lambda = O(n^{-\frac{1}{c+1}})$, where $\mathfrak{D}$ is a constant that is independent of data size.
\end{prop}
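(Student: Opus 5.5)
This statement is a corollary of \citet[Theorem 1]{singh2019kernel}, so the plan is to reduce the conditional mean embedding problem to the vector-valued regression setting of that theorem and then read off the rate.

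First, I would identify the empirical weights $\vec{w}(v) = (K_V + n\lambda I)^{-1}\vec{k}_V(v)$ with the kernel ridge regression estimator of the operator $H_\rho^{*}$ (equivalently $E: \calH_\calV \to \calH_\calX$) satisfying $\mu_{X|V}(v) = E\psi(v)$. The regularized least-squares problem
\begin{align*}
\hat E = \arg\min_{E} \frac1n\sum_{i=1}^n \|\phi(x_i) - E\psi(v_i)\|_{\calH_\calX}^2 + \lambda \|E\|_{HS}^2
\end{align*}
has the closed-form solution $\hat E\psi(v) = \sum_i w_i(v)\phi(x_i)$ by the representer theorem and the push-through identity. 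This is exactly the estimator in the statement.

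Second, I would check the hypotheses of \citet[Theorem 1]{singh2019kernel}:
\begin{itemize}
\item Boundedness of both kernels ($\kappa$, $Q$) gives bounded features and a bounded response $\phi(X)$, which supplies the noise and moment conditions.
\item The Hilbert--Schmidt assumption on $H_\rho$ is well-specifiedness.
\item The source condition $H_\rho = \expect{\psi(V)\otimes\psi(V)}^{\frac{c-1}{2}}G$ with $c\in(1,2]$ is precisely their prior condition.
\end{itemize}
Their theorem then yields, with probability $1-\delta$ and $\lambda = O(n^{-1/(c+1)})$, the bound $\|\hat E - E\|_{HS} \le \mathfrak{D}' (n\log(1/\delta))^{-\frac{c-1}{2(c+1)}}$, up to their stated form of the $\delta$-dependence.

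Finally, for each $v$, $\|\mu_{X|V}(v) - \hat E\psi(v)\|_{\calH_\calX} \le \|E-\hat E\|_{op}\|\psi(v)\| \le Q\|E-\hat E\|_{HS}$. Absorbing $Q$ into $\mathfrak{D}$ gives a bound uniform in $v$.

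The main obstacle is bookkeeping: matching notation, the orientation of $H_\rho$ versus its adjoint, and the exact placement of $\log(1/\delta)$ with the cited theorem. The analytic content is inherited.
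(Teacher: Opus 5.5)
Your reduction is the same as the paper's, which justifies this proposition only by citing the first-stage theorem of kernel instrumental variable regression. Your identification of $\vec{w}(v)$ with the regularized operator estimator is correct. So is your last step, $\|\mu_{X|V}(v)-\hat E\psi(v)\|_{\mathcal{H}_\mathcal{X}}\le\|E-\hat E\|_{\mathrm{op}}\|\psi(v)\|_{\mathcal{H}_\mathcal{V}}\le Q\|E-\hat E\|_{HS}$. That step is what turns the cited Hilbert--Schmidt bound into a bound holding for every $v$, and it is the one piece the paper leaves implicit.

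Do not wave the $\delta$-dependence through as bookkeeping. With probability $1-\delta$, the cited theorem bounds $\|\hat E-E\|_{HS}$ by a constant times $\bigl(\log(2/\delta)/\sqrt{n}\bigr)^{\frac{c-1}{c+1}}=n^{-\frac{c-1}{2(c+1)}}\bigl(\log(2/\delta)\bigr)^{\frac{c-1}{c+1}}$, so the logarithm sits in the numerator. The displayed bound $\mathfrak{D}(n\log(1/\delta))^{-\frac{c-1}{2(c+1)}}$ does not follow from this, and in fact cannot hold. It tends to $0$ as $\delta\to0$, so for a fixed choice of $\lambda$ it would force the estimation error to be zero almost surely at every finite $n$. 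Hence your claim that the cited theorem yields $\|\hat E-E\|_{HS}\le\mathfrak{D}'(n\log(1/\delta))^{-\frac{c-1}{2(c+1)}}$ is false, hedge notwithstanding.

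What your argument actually establishes, for all $v$, is
$\bigl\|\mu_{X|V}(v)-\sum_{i=1}^n w_i(v)\phi(x_i)\bigr\|_{\mathcal{H}_\mathcal{X}}\le\mathfrak{D}\,n^{-\frac{c-1}{2(c+1)}}\bigl(\log(2/\delta)\bigr)^{\frac{c-1}{c+1}}$.
You should state this corrected form and flag the statement's placement of $\log(1/\delta)$ as an error. The rate in $n$ is unaffected, and that rate is all the later CVTE consistency argument uses.
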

We can prove a similar convergence result for $C_{XX|V}$ as well.
Given this, we can prove \cref{thm:cvte-emp-sol} as follows.
\begin{proof}
Let $\hat{\mu}_{X|V}(v), \hat{C}_{XX|V}$ defined as 
\begin{align*}
    \hat{\mu}_{X|V}(v) =  \sum_{i=1}^n w_i(v) \phi(x_i), \quad \hat{C}_{XX|V}(v) =  \sum_{i=1}^n w_i(v) \phi(x_i) \otimes \phi(x_i).
\end{align*}
Then, from reproducing characteristics, we have 
\begin{align*}
    \widehat{\mathrm{CVTE}}(v)  = \braket[\calH_\calX]{\hat{g}_1 + \hat{g}_0, \hat{\mu}_{X|V}(v)} - 2\braket[\calH_\calX]{\hat{f}_1, \hat{C}_{XX|V}(v)\hat{f}_0} - \braket[\calH_\calX]{\hat{f}_1-\hat{f}_0, \hat{\mu}_{X|V}(v)}^2.
\end{align*}
Therefore, from \cref{lem:cme-to-cvte}, we have
\begin{align*}
    |\widehat{\mathrm{CVTE}}(v) - \mathrm{CVTE}(v)| &\leq \left|\braket[\calH_\calX]{\hat{g}_1 + \hat{g}_0, \hat{\mu}_{X|V}(v)} - \braket[\calH_\calX]{{g}_1 + {g}_0, {\mu}_{X|V}(v)}\right| \\
    &\qquad +2 \left|\braket[\calH_\calX]{\hat{f}_1, \hat{C}_{XX|V}(v)\hat{f}_0} - \braket[\calH_\calX]{{f}_1, {C}_{XX|V}(v){f}_0}\right| \\
    &\qquad +\left|\braket[\calH_\calX]{\hat{f}_1-\hat{f}_0, \hat{\mu}_{X|V}(v)}^2 - \braket[\calH_\calX]{{f}_1-{f}_0, {\mu}_{X|V}(v)}^2\right|.
\end{align*}
Now, for each term,
\begin{align*}
    \left|\braket[\calH_\calX]{\hat{g}_1 + \hat{g}_0, \hat{\mu}_{X|V}(v)} - \braket[\calH_\calX]{{g}_1 + {g}_0, {\mu}_{X|V}(v)}\right|  &\leq \|{\mu}_{X|V}(v)\|_{\calH_\calX}(\|\hat{g}_1 - g_1\|_{\calH_\calX} + \|\hat{g}_0 - g_0\|_{\calH_\calX})\\
    &\quad +\|\hat{g}_1 + \hat{g}_0\|_{\calH_\calX} \|\hat{\mu}_{X|V}(v) - {\mu}_{X|V}(v) \|_{\calH_\calX},\\
    \left|\braket[\calH_\calX]{\hat{f}_1, \hat{C}_{XX|V}(v)\hat{f}_0} - \braket[\calH_\calX]{{f}_1, {C}_{XX|V}(v){f}_0}\right| &\leq
    \|\hat{f}_1-f_1\|_{\calH_\calX} \|\hat{C}_{XX|V}(v)\hat{f}_0\|_{\calH_\calX}\\
    &\quad+ \|f_1\|_{\calH_\calX}\|{C}_{XX|V}(v) - \hat{C}_{XX|V}(v)\|_{\mathrm{op}}\|\hat{f}_0\|_{\calH_\calX}\\
    &\quad\quad+\|f_1\|_{\calH_\calX}\|{C}_{XX|V}(v)\|_{\mathrm{op}}\|f_0-\hat{f}_0\|_{\calH_\calX},\\
    \left|\braket[\calH_\calX]{\hat{f}_1-\hat{f}_0, \hat{\mu}_{X|V}(v)}^2 - \braket[\calH_\calX]{{f}_1-{f}_0, {\mu}_{X|V}(v)}^2\right| &\leq \left|\braket[\calH_\calX]{\hat{f}_1-\hat{f}_0, \hat{\mu}_{X|V}(v)} + \braket[\calH_\calX]{{f}_1-{f}_0, {\mu}_{X|V}(v)}\right|\\
    &\quad \times \left|\braket[\calH_\calX]{\hat{f}_1-\hat{f}_0, \hat{\mu}_{X|V}(v)} - \braket[\calH_\calX]{{f}_1-{f}_0, {\mu}_{X|V}(v)}\right|\\
    &\leq \left|\|\hat{f}_1-\hat{f}_0\|_{\calH_\calX} \|\hat{\mu}_{X|V}(v)\|_{\calH_\calX} + \|{f}_1-{f}_0\|_{\calH_\calX}\|{\mu}_{X|V}(v)\|_{\calH_\calX}\right|\\
    &\quad \times \left| \|{\mu}_{X|V}(v)\|_{\calH_\calX}(\|\hat{f}_1 - f_1\|_{\calH_\calX} + \|\hat{f}_0 - f_0\|_{\calH_\calX})\right.\\
    &\quad\quad \left.+\|\hat{f}_1 - \hat{f}_0\|_{\calH_\calX} \|\hat{\mu}_{X|V}(v) - {\mu}_{X|V}(v) \|_{\calH_\calX}\right|
\end{align*}

From these, we see $|\widehat{\mathrm{CVTE}}(v) - \mathrm{CVTE}(v)| \to 0$ if 
\begin{align*}
    \|\hat{f}_a - f_a\|_{\calH_\calX} \to 0, \|\hat{g}_a - g_a\|_{\calH_\calX} \to 0, \|\hat{\mu}_{X|V}(v) - {\mu}_{X|V}(v) \|_{\calH_\calX}\to 0
\end{align*}
for $a \in \{0,1\}$. The rate is dominated by the slowest convergence rate above.
\end{proof}

\end{document}